\definecolor{mydarkred}{rgb}{0.6,0,0}
\definecolor{mydarkgreen}{rgb}{0,0.6,0}
\newtheorem{theorem}{Theorem}
\newtheorem{lemma}[theorem]{Lemma}
\providecommand*{\input@path}{}
\g@addto@macro\input@path{{"C:/Users/me/inpath/"}}
\let\save@mathaccent\mathaccent
\newcommand*\if@single[3]{%
  \setbox0\hbox{${\mathaccent"0362{#1}}^H$}%
  \setbox2\hbox{${\mathaccent"0362{\kern0pt#1}}^H$}%
  \ifdim\ht0=\ht2 #3\else #2\fi
  }
\newcommand*\rel@kern[1]{\kern#1\dimexpr\macc@kerna}
\newcommand*\widebar[1]{\@ifnextchar^{{\wide@bar{#1}{0}}}{\wide@bar{#1}{1}}}
\newcommand*\wide@bar[2]{\if@single{#1}{\wide@bar@{#1}{#2}{1}}{\wide@bar@{#1}{#2}{2}}}
\newcommand*\wide@bar@[3]{%
  \begingroup
  \def\mathaccent##1##2{%
    \let\mathaccent\save@mathaccent
    \if#32 \let\macc@nucleus\first@char \fi
    \setbox\z@\hbox{$\macc@style{\macc@nucleus}_{}$}%
    \setbox\tw@\hbox{$\macc@style{\macc@nucleus}{}_{}$}%
    \dimen@\wd\tw@
    \advance\dimen@-\wd\z@
    \divide\dimen@ 3
    \@tempdima\wd\tw@
    \advance\@tempdima-\scriptspace
    \divide\@tempdima 10
    \advance\dimen@-\@tempdima
    \ifdim\dimen@>\z@ \dimen@0pt\fi
    \rel@kern{0.6}\kern-\dimen@
    \if#31
      \overline{\rel@kern{-0.6}\kern\dimen@\macc@nucleus\rel@kern{0.4}\kern\dimen@}%
      \advance\dimen@0.4\dimexpr\macc@kerna
      \let\final@kern#2%
      \ifdim\dimen@<\z@ \let\final@kern1\fi
      \if\final@kern1 \kern-\dimen@\fi
    \else
      \overline{\rel@kern{-0.6}\kern\dimen@#1}%
    \fi
  }%
  \macc@depth\@ne
  \let\math@bgroup\@empty \let\math@egroup\macc@set@skewchar
  \mathsurround\z@ \frozen@everymath{\mathgroup\macc@group\relax}%
  \macc@set@skewchar\relax
  \let\mathaccentV\macc@nested@a
  \if#31
    \macc@nested@a\relax111{#1}%
  \else
    \def\gobble@till@marker##1\endmarker{}%
    \futurelet\first@char\gobble@till@marker#1\endmarker
    \ifcat\noexpand\first@char A\else
      \def\first@char{}%
    \fi
    \macc@nested@a\relax111{\first@char}%
  \fi
  \endgroup
}
\newif\ifmlj
\DeclareMathOperator*{\E}{E}
\DeclareMathOperator*{\Var}{Var}
\DeclareMathOperator*{\Cov}{Cov}
\DeclareMathOperator*{\argmin}{argmin}
\DeclareMathOperator*{\sign}{sign}
\renewcommand{\Re}{\mathbb{R}}
\newcommand{\AUC}{\mathrm{AUC}}
\newcommand{\PN}{\mathrm{PN}}
\newcommand{\PU}{\mathrm{PU}}
\newcommand{\NU}{\mathrm{NU}}
\newcommand{\PNPU}{\mathrm{PNPU}}
\newcommand{\PNNU}{\mathrm{PNNU}}
\newcommand{\PNU}{\mathrm{PNU}}
\newcommand{\PP}{\mathrm{PP}}
\newcommand{\NN}{\mathrm{NN}}
\newcommand{\rP}{{\mathrm{P}}}
\newcommand{\rN}{{\mathrm{N}}}
\newcommand{\rU}{{\mathrm{U}}}
\newcommand{\Ep}{\E\nolimits_{\rP}}
\newcommand{\En}{\E\nolimits_{\rN}}
\newcommand{\Eu}{\E\nolimits_{\rU}}
\newcommand{\pp}{p_{\rP}}
\newcommand{\pn}{p_{\rN}}
\newcommand{\np}{{n_{\rP}}}
\newcommand{\nn}{{n_{\rN}}}
\newcommand{\nun}{{n_{\rU}}}
\newcommand{\Epb}{\E\nolimits_\mathrm{\widebar{P}}}
\newcommand{\Enb}{\E\nolimits_\mathrm{\widebar{N}}}
\newcommand{\nl}{{n_\mathrm{L}}}
\newcommand{\fh}{\widehat{f}}
\newcommand{\thetap}{\theta_{\rP}}
\newcommand{\thetan}{\theta_{\rN}}
\newcommand{\ella}{\ell_{\mathrm{A}}}
\newcommand{\elle}{\ell_{\mathrm{E}}}
\newcommand{\ellh}{\ell_{\mathrm{H}}}
\newcommand{\elll}{\ell_{\mathrm{L}}}
\newcommand{\ells}{\ell_{\mathrm{S}}}
\newcommand{\ellzo}{\ell_{0\textrm{-}1}}
\newcommand{\Rh}{\widehat{R}}
\newcommand{\bw}{\boldsymbol{w}}
\newcommand{\bx}{\boldsymbol{x}}
\newcommand{\bI}{\boldsymbol{I}}
\newcommand{\bmu}{\boldsymbol{\mu}}
\newcommand{\bphi}{\boldsymbol{\phi}}
\newcommand{\bbarphi}{\boldsymbol{\bar{\phi}}}
\newcommand{\bone}{\boldsymbol{1}}
\newcommand{\Eq}{\E\nolimits_{\bx\sim q}}
\newcommand{\Eqp}{\E\nolimits_{\bx'\sim q'}}
\newcommand{\bxp}{\boldsymbol{x}^\rP}
\newcommand{\bxn}{\boldsymbol{x}^\rN}
\newcommand{\bxu}{\boldsymbol{x}^\rU}
\newcommand{\bPhip}{\boldsymbol{\Phi}_{\rP}}
\newcommand{\bPhin}{\boldsymbol{\Phi}_{\rN}}
\newcommand{\bPhiu}{\boldsymbol{\Phi}_{\rU}}
\newcommand{\cF}{\mathcal{F}}
\newcommand{\cO}{\mathcal{O}}
\newcommand{\cXp}{\mathcal{X}_\rP}
\newcommand{\cXn}{\mathcal{X}_\rN}
\newcommand{\cXu}{\mathcal{X}_\rU}
\newcommand{\bhh}{\boldsymbol{\widehat{h}}}
\newcommand{\bHh}{\boldsymbol{\widehat{H}}}
\newcommand{\bmup}{\boldsymbol{\mu}_\mathrm{P}}
\newcommand{\bmun}{\boldsymbol{\mu}_\mathrm{N}}
\newcommand{\rhop}{\rho_\mathrm{P}}
\newcommand{\rhon}{\rho_\mathrm{N}}
\newcommand{\taup}{\tau_\mathrm{P}}
\newcommand{\taun}{\tau_\mathrm{N}}
\newcommand{\ap}{a_\mathrm{P}}
\newcommand{\an}{a_\mathrm{N}}
\newcommand{\bp}{b_\mathrm{P}}
\newcommand{\bn}{b_\mathrm{N}}
\newcommand{\bbarxp}{\boldsymbol{\widebar{x}}^\rP}
\newcommand{\bbarxn}{\boldsymbol{\widebar{x}}^\rN}
\newcommand{\psipn}{\psi_\mathrm{PN}}
\newcommand{\psipp}{\psi_\mathrm{PP}}
\newcommand{\psipu}{\psi_\mathrm{PU}}
\newcommand{\psinn}{\psi_\mathrm{NN}}
\newcommand{\psinu}{\psi_\mathrm{NU}}
\newcommand{\sigmapn}{\sigma_\mathrm{PN}}
\newcommand{\sigmapp}{\sigma_\mathrm{PP}}
\newcommand{\sigmann}{\sigma_\mathrm{NN}}
\newcommand{\taupnpu}{\tau_\mathrm{PN,PU}}
\newcommand{\taupnnu}{\tau_\mathrm{PN,NU}}
\newcommand{\taupnpp}{\tau_\mathrm{PN,PP}}
\newcommand{\taupnnn}{\tau_\mathrm{PN,NN}}
\newcommand{\taupupp}{\tau_\mathrm{PU,PP}}
\newcommand{\taununn}{\tau_\mathrm{NU,NN}}
\title{
	Semi-Supervised AUC Optimization 
	based on Positive-Unlabeled Learning
	}
\author[1,2]{Tomoya Sakai}
\author[1,2]{Gang Niu}
\author[2,1]{Masashi Sugiyama}
\affil[1]{Graduate School of Frontier Sciences, \protect\\ 
	The University of Tokyo, Japan}
\affil[2]{Center for Advanced Intelligence Project, \protect\\ 
	RIKEN, Japan}
\date{}
\title{
	Semi-Supervised AUC Optimization 
	based on Positive-Unlabeled Learning
	}
\author{Tomoya Sakai
 	\and Gang Niu
	\and Masashi Sugiyama
	}
\institute{
	T. Sakai \at
	Graduate School of Frontier Sciences, The University of Tokyo, Japan \\
	Center for Advanced Intelligence Project, RIKEN, Japan \\
	\email{sakai@ms.k.u-tokyo.ac.jp}
\and	
	G. Niu \at
	Graduate School of Frontier Sciences, The University of Tokyo, Japan \\
	\email{gang@ms.k.u-tokyo.ac.jp}	
\and
	M. Sugiyama \at
	Center for Advanced Intelligence Project, RIKEN, Japan \\ 	
	Graduate School of Frontier Sciences, The University of Tokyo, Japan \\
	\email{sugi@k.u-tokyo.ac.jp}  
	}
\begin{document}
\sloppy

\maketitle

\begin{abstract}
Maximizing the area under the receiver operating characteristic curve (AUC)
is a standard approach to imbalanced classification.
So far, various supervised AUC optimization methods have been developed 
and they are also extended to semi-supervised scenarios
to cope with small sample problems.
However, existing semi-supervised AUC optimization methods rely on
strong distributional assumptions, which are rarely satisfied 
in real-world problems.
In this paper, we propose a novel semi-supervised AUC optimization method
that does not require such restrictive assumptions.
We first develop an AUC optimization method based only on 
positive and unlabeled data (PU-AUC) and then extend it to semi-supervised
learning by combining it with a supervised AUC optimization method.
We theoretically prove that, without the restrictive distributional 
assumptions, unlabeled data contribute to improving the generalization
performance in PU and semi-supervised AUC optimization methods.
Finally, we demonstrate the practical usefulness of the proposed methods
through experiments. 
\end{abstract}

\if 0
\keywords{
AUC optimization \and
Learning from positive and unlabeled data \and
Semi-supervised learning
}
\fi

\section{Introduction}
\label{sec:intro}
Maximizing the \emph{area under the receiver operating characteristic curve} 
(AUC) \citep{Radiology:Hanley+McNeil:1982} is a standard approach to imbalanced 
classification \citep{NIPS:Cortes+Mohri:2004}.
While the misclassification rate relies on the sign of the score of a single
sample, AUC is governed by the ranking of the scores of two samples.
Based on this principle, various supervised methods for directly optimizing 
AUC have been developed so far and demonstrated to be useful 
\citep{ICML:Herschtal+Raskutti:2004,ICML:Zhao+etal:2011,ICML:Rakhlin+etal:2012,ICML:Kotlowski+etal:2011,NIPS:Ying+etal:2016}.

However, collecting labeled samples is often expensive and laborious 
in practice.
To mitigate this problem, \emph{semi-supervised} AUC optimization methods 
have been developed that can utilize unlabeled samples
\citep{SIGIR:Amini+etal:2008,ICDM:Fujino+Ueda:2016}.
These semi-supervised methods solely rely on the assumption that an unlabeled
sample that is ``similar'' to a labeled sample shares the same label.
However, such a restrictive distributional assumption (which is often 
referred to as the cluster or the entropy minimization principle)
is rarely satisfied in practice and thus the practical usefulness of
these semi-supervised methods is limited
\citep{ICML:Cozman+etal:2003,ICML:Sokolovska+etal:2008,PAMI:Li+Zhou:2015,PR:Krijthe:2017}.

On the other hand, it has been recently shown that unlabeled data can be 
effectively utilized without such restrictive distributional assumptions
in the context of \emph{classification from positive and unlabeled data}
(PU classification) \citep{NIPS:duPlessis+etal:2014}.
Furthermore, based on recent advances in PU classification 
\citep{NIPS:duPlessis+etal:2014,ICML:duPlessis+etal:2015,NIPS:Niu+etal:2016},
a novel semi-supervised classification approach has been developed that
combines supervised classification with PU classification
\citep{ICML:Sakai+etal:2017}.
This approach inherits the advances of PU classification
that the restrictive distributional assumptions are not necessary
and is demonstrated to perform excellently in experiments.

Following this line of research, we first develop an AUC optimization method
from positive and unlabeled data (PU-AUC) in this paper.
Previously, a \emph{pairwise ranking} method for PU data has been developed
\citep{CIKM:Sundararajan+etal:2011},
which can be regarded as an AUC optimization method for PU data.
However, it merely regards unlabeled data as negative data and 
thus the obtained classifier is biased.
On the other hand, our PU-AUC method is unbiased
and we theoretically prove that unlabeled data contribute to 
reducing an upper bound on the generalization error with 
the optimal parametric convergence rate without 
the restrictive distributional assumptions.

Then we extend our PU-AUC method to the semi-supervised setup by 
combining it with a supervised AUC optimization method.
Theoretically, we again prove that unlabeled data contribute to 
reducing an upper bound on the generalization error 
with the optimal parametric convergence rate 
without the restrictive distributional assumptions,
and further we prove that the variance of the empirical risk 
of our semi-supervised AUC optimization method
can be smaller than that of the plain supervised counterpart.
The latter claim suggests that the proposed semi-supervised
empirical risk is also useful in the cross-validation phase.
Finally, we experimentally demonstrate the usefulness of 
the proposed PU and semi-supervised AUC optimization methods.

\section{Preliminary}
We first describe our problem setting and 
review an existing supervised AUC optimization method.

Let covariate $\bx\in\Re^d$ and its corresponding label $y\in\{\pm1\}$ 
be equipped with 
probability density $p(\bx,y)$, where $d$ is a positive integer.
Suppose we have sets of positive and negative samples:
\begin{align*}
\cXp&:=\{\bxp_i\}^\np_{i=1} 
	\stackrel{\mathrm{i.i.d.}}{\sim} 
	\pp(\bx):=p(\bx\mid y=+1) , \mathrm{\;and} \notag \\ 
\cXn&:=\{\bxn_j\}^\nn_{j=1} 
	\stackrel{\mathrm{i.i.d.}}{\sim} 
	\pn(\bx):=p(\bx\mid y=-1) .
\end{align*}
Furthermore, let $g\colon\Re^d\to\Re$ be a decision function 
and classification is carried out based on its sign:
$\widehat{y}=\sign(g(\bx))$.

The goal is to train a classifier $g$ by maximizing the AUC
\citep{Radiology:Hanley+McNeil:1982,NIPS:Cortes+Mohri:2004} 
defined and expressed as
\begin{align}
\AUC(g)&:=\Ep[\En[I(g(\bxp)\geq g(\bxn))]] \notag \\
&\phantom{:}=1-\Ep[\En[I(g(\bxp)<g(\bxn))]] \notag \\
&\phantom{:}=1-\Ep[\En[\ellzo(g(\bxp)-g(\bxn))]] , 
	\label{eq:auc-def}
\end{align}
where 
$\Ep$ and $\En$ be the expectations over 
$\pp(\bx)$ and $\pn(\bx)$, respectively.
$I(\cdot)$ is the indicator function, 
which is replaced with the \emph{zero-one} loss,
$\ellzo(m)=(1-\sign(m))/2$, to obtain the last equation.
Let
\begin{align*}
f(\bx,\bx'):=g(\bx)-g(\bx')
\end{align*} 
be a composite classifier.
Maximizing the AUC corresponds to minimizing the second term 
in Eq.\eqref{eq:auc-def}.
Practically, to avoid the discrete nature of the zero-one loss,
we replace the zero-one loss with a surrogate loss $\ell(m)$
and consider 
the following PN-AUC risk
\citep{ICML:Herschtal+Raskutti:2004,ICML:Kotlowski+etal:2011,ICML:Rakhlin+etal:2012}:
\begin{align}
R_\PN(f):=\Ep[\En[\ell(f(\bxp,\bxn))]] .
\label{eq:pn-risk}
\end{align}
In practice, we train a classifier by minimizing the empirical PN-AUC risk 
defined as 
\begin{align*}
\Rh_\PN(f):=\frac{1}{\np\nn}\sum^\np_{i=1}\sum^\nn_{j=1}
	\ell(f(\bxp_i,\bxn_j)) .
\end{align*}

Similarly to the \emph{classification-calibrated} loss 
\citep{JASA:Bartlett+etal:2006} in misclassification rate minimization,
the consistency of AUC optimization in terms of loss functions
has been studied recently \citep{IJCAI:Gao+Zhou:2015,AIJ:Gao+etal:2016}.
They showed that minimization of the AUC risk with a consistent loss function
is asymptotically equivalent to that with the zero-one loss function. 
The squared loss $\ells(m):=(1-m)^2$, 
the exponential loss $\elle(m):=\exp(-m)$,
and 
the logistic loss $\elll(m):=\log(1+\exp(-m))$
are shown to be consistent, while the hinge loss $\ellh(m):=\max(0,1-m)$
and the absolute loss $\ella(m):=|1-m|$ are \emph{not} 
consistent.

\section{Proposed Method}
\label{sec:pu-auc}
In this section, we first propose an AUC optimization method from
positive and unlabeled data 
and then extend it to a semi-supervised AUC optimization method.

\subsection{PU-AUC Optimization}
In PU learning, we do not have negative data while 
we can use unlabeled data drawn from marginal density $p(\bx)$
in addition to positive data:
\begin{align}
\cXu&:=\{\bxu_k\}^\nun_{k=1} 
	\stackrel{\mathrm{i.i.d.}}{\sim} 
 	p(\bx)=\thetap\pp(\bx)+\thetan\pn(\bx) ,
 	\label{eq:marginal}
\end{align}
where
\begin{align*}
\thetap:=p(y=+1) \mathrm{\;\;and\;\;} \thetan:=p(y=-1).
\end{align*}
We derive an equivalent expression to the PN-AUC risk 
that depends only on positive and unlabeled data distributions 
without the negative data distribution. 
In our derivation and theoretical analysis, 
we assume that $\thetap$ and $\thetan$ are known.
In practice, they are replaced by their estimate obtained, e.g., 
by \citet{MLJ:duPlessis+etal:2017}, \citet{IEICE:Kawakubo+etal:2016},
and references therein.

From the definition of the marginal density in Eq.~\eqref{eq:marginal}, 
we have
\begin{align*}
\Ep[\Eu[\ell(f(\bxp, \bxu))]]  
&=\thetap\Ep[\Epb[\ell(f(\bxp, \bbarxp))]] 	
	+\thetan\Ep[\En[\ell(f(\bxp, \bxn))]]  \\
&=\thetap\Ep[\Epb[\ell(f(\bxp, \bbarxp))]] 	
	+\thetan R_\PN(f),
\end{align*}
where $\Epb$ denotes the expectation over $\pp(\bbarxp)$.
Dividing the above equation by $\thetan$ and rearranging it,
we can express the PN-AUC risk in Eq.~\eqref{eq:pn-risk} 
based on PU data (the PU-AUC risk) as
\begin{align}
R_\PN(f)=\frac{1}{\thetan}
	\Ep[\Eu[\ell(f(\bxp, \bxu))]] 	
	-\frac{\thetap}{\thetan}
	\Ep[\Epb[\ell(f(\bxp, \bbarxp))]] := R_\PU(f) .
\label{eq:pu-risk}	
\end{align}
We refer to the method minimizing the PU-AUC risk as 
\emph{PU-AUC optimization}.
We will theoretically investigate the superiority of $R_\PU$
in Section~\ref{sec:theory-gen-err}.

To develop a semi-supervised AUC optimization method later,
we also consider AUC optimization from \emph{negative} and unlabeled data,
which can be regarded as a mirror of PU-AUC optimization.
From the definition of the marginal density in Eq.~\eqref{eq:marginal},
we have
\begin{align*}
\Eu[\En[\ell(f(\bxu, \bxn))]]  
&=\thetap\Ep[\En[\ell(f(\bxp, \bxn))]] 
	+\thetan\En[\Enb[\ell(f(\bxn, \bbarxn))]] \\
&=\thetap R_\PN(f)
	+\thetan\En[\Enb[\ell(f(\bxn, \bbarxn))]] , 
\end{align*}
where $\Enb$ denotes the expectation over $\pn(\bbarxn)$.
Rearranging the above equation, we can obtain 
the PN-AUC risk in Eq.~\eqref{eq:pn-risk} based on 
negative and unlabeled data (the NU-AUC risk):
\begin{align}
R_\PN(f)=\frac{1}{\thetap}\!
	\Eu[\En[\ell(f(\bxu, \bxn))]] 
	-\frac{\thetan}{\thetap}\! 
	\En[\Enb[\ell(f(\bxn, \bbarxn))]] := R_\NU(f) .
\label{eq:nu-risk}		
\end{align}
We refer to the method minimizing the NU-AUC risk as
\emph{NU-AUC optimization}.

\subsection{Semi-Supervised AUC Optimization}
\label{sec:ss-auc}
Next, we propose a novel semi-supervised AUC optimization method
based  on positive-unlabeled learning.
The idea is to combine the PN-AUC risk with the PU-AUC/NU-AUC risks,
similarly to \citet{ICML:Sakai+etal:2017}.\footnote{
In \citet{ICML:Sakai+etal:2017}, the combination of the PU and NU risks
has also considered and found to be less favorable than 
the combination of the PN and PU/NU risks. 
For this reason, we focus on the latter in this paper. 
}

First of all, let us define the PNPU-AUC and PNNU-AUC risks as
\begin{align*}
R_\PNPU^\gamma(f)
	&:=(1-\gamma)R_\PN(f)+\gamma R_\PU(f), \\
R_\PNNU^\gamma(f)
	&:=(1-\gamma)R_\PN(f)+\gamma R_\NU(f) ,
\end{align*}
where $\gamma\in[0, 1]$ is the combination parameter.
We then define the PNU-AUC risk as
\begin{align}
R_\PNU^\eta(f):=
\begin{cases}
R_\PNPU^\eta(f) & (\eta\geq0), \\
R_\PNNU^{-\eta}(f) & (\eta<0),
\end{cases}
\label{eq:pnu-risk}
\end{align}
where $\eta\in[-1, 1]$ is the combination parameter.
We refer to the method minimizing the PNU-AUC risk
as \emph{PNU-AUC optimization}.
We will theoretically discuss the superiority of $R_\PNPU^\gamma$
and $R_\PNNU^\gamma$ in Section~\ref{sec:theory-gen-err}.

\subsection{Discussion about Related Work}
\citet{CIKM:Sundararajan+etal:2011} proposed 
a pairwise ranking method for PU data, 
which can be regarded as an AUC optimization method for PU data.
Their approach simply regards unlabeled data as negative data and the ranking
SVM \citep{KDD:Joachims:2002} 
is applied to PU data so that the score of positive data 
tends to be higher than that of unlabeled data.  
Although this approach is simple and shown computationally
efficient in experiments, the obtained classifier is biased.
From the mathematical viewpoint, the existing method 
ignores the second term in Eq.~\eqref{eq:pu-risk}   
and maximizes only the first term with the hinge loss function.
However, the effect of ignoring the second term 
is not negligible when the class prior, $\thetap$, is
not sufficiently small.
In contrast, our proposed PU-AUC risk includes 
the second term so that the PU-AUC risk is equivalent to the PN-AUC risk.
 
Our semi-supervised AUC optimization method 
can be regarded as an extension of the work by
\citet{ICML:Sakai+etal:2017}.
They considered the misclassification rate
as a measure to train a classifier
and proposed a semi-supervised classification method 
based on the recently proposed PU classification method
\citep{NIPS:duPlessis+etal:2014,ICML:duPlessis+etal:2015}.
On the other hand, we train a classifier by maximizing the AUC, 
which is a standard approach for imbalanced classification.
To this end, we first developed an AUC optimization method for PU data,
and then extended it to a semi-supervised AUC optimization method.
Thanks to the AUC maximization formulation, our proposed method is expected to
perform better than the method proposed by \citet{ICML:Sakai+etal:2017} 
for imbalanced data sets.

\section{Theoretical Analyses}
\label{sec:theory}
In this section, we theoretically analyze the proposed risk functions.
We first derive generalization error bounds of our methods and 
then discuss variance reduction.

\subsection{Generalization Error Bounds}
\label{sec:theory-gen-err}
Recall the composite classifier $f(\bx,\bx')=g(\bx)-g(\bx')$.
As the classifier $g$,  
we assume the linear-in-parameter model given by
\begin{align*}
g(\bx)=\sum^b_{\ell=1}w_\ell \phi(\bx)=\bw^\top\bphi(\bx) ,
\end{align*}
where ${}^\top$ denotes the transpose of vectors and matrices,
$b$ is the number of basis functions, 
$\bw=(w_1,\ldots,w_b)^\top$ is a parameter vector, 
and $\bphi(\bx)=(\phi_1(\bx),\ldots,\phi_b(\bx))^\top$ 
is a basis function vector.
Let $\cF$ be a function class of bounded hyperplanes:
\begin{align*}
\cF:=\{ f(\bx,\bx')=
	\bw^\top(\bphi(\bx)-\bphi(\bx'))
	\mid \|\bw\|\leq C_{\bw}; \; 
	\forall\bx\colon
	\|\bphi(\bx)\|\leq C_{\bphi} \},	
\end{align*}
where 
$C_{\bw}>0$ and $C_{\bphi}>0$ are certain positive constants.
This assumption is reasonable because
the $\ell_2$-regularizer included in training
and the use of bounded basis functions,
e.g., the Gaussian kernel basis,
ensure that the minimizer of the empirical AUC risk belongs to such 
the function class $\cF$.  
We assume that a surrogate loss is bounded from above by $C_\ell$
and denote the Lipschitz constant by $L$.
For simplicity,\footnote{
Our theoretical analysis can be easily extended to 
the loss satisfying $\ellzo(m)\leq M\ell(m)$ with a certain $M>0$.
}
we focus on a surrogate loss satisfying $\ellzo(m)\leq\ell(m)$.
For example, the squared loss
and the exponential loss satisfy the condition.\footnote{
These losses are bounded in our setting, 
since the input to $\ell(m)$, i.e., $f$ is bounded. 
}

Let
\begin{align*}
I(f)=\Ep[\En[\ellzo(f(\bxp,\bxn))]] 
\end{align*} 
be the generalization error of $f$ in AUC optimization.
For convenience, we define
\begin{align*}
h(\delta)&:=2\sqrt{2}LC_\ell C_{\bw}C_{\bphi}
	+\frac{3}{2}\sqrt{2\log(2/\delta)} .
\end{align*} 
In the following, we prove the generalization error bounds
of both PU and semi-supervised AUC optimization methods.

For the PU-AUC/NU-AUC risks,
we prove the following generalization error bounds
(its proof is available in Appendix~\ref{proof:gen-err}):
\begin{theorem}
\label{thm:pu-nu-gen-err}
For any $\delta>0$, the following inequalities hold separately
with probability at least $1-\delta$ for all $f\in\cF$:
\begin{align*}
I(f)\leq \Rh_\PU(f) + h(\delta/2)
	\Big(\frac{1}{\thetan\sqrt{\min(\np,\nun)}}
	+\frac{\thetap}{\thetan\sqrt{\np}}
	\Big) , \\
I(f)\leq \Rh_\NU(f) + h(\delta/2)
	\Big(\frac{1}{\thetap\sqrt{\min(\nn,\nun)}}
	+\frac{\thetan}{\thetap\sqrt{\nn}} 
	\Big) , \\
\end{align*}
where $\Rh_\PU$ and $\Rh_\NU$ are 
unbiased empirical risk estimators corresponding to 
$R_\PU$ and $R_\NU$, respectively.
\end{theorem}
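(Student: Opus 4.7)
The plan is to establish the bounds via the standard three-step pipeline: (i) reduce the generalization error $I(f)$ to the PU risk by combining the pointwise inequality $\ellzo(m)\leq\ell(m)$ with the identity $R_\PN(f)=R_\PU(f)$ established in Eq.~\eqref{eq:pu-risk}, which gives $I(f)\leq R_\PU(f)$; (ii) add and subtract $\Rh_\PU(f)$ and bound the uniform deviation $\sup_{f\in\cF}\bigl(R_\PU(f)-\Rh_\PU(f)\bigr)$ via McDiarmid's concentration inequality combined with symmetrization and a Rademacher complexity analysis; (iii) control the Rademacher complexity of the resulting pairwise function class by exploiting the linearity of $f$ in $\bw$ and the Lipschitz property of $\ell$.

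Concretely, I would split the deviation according to the two-term decomposition of $R_\PU$: writing $R_\PU(f)=\tfrac{1}{\thetan}A(f)-\tfrac{\thetap}{\thetan}B(f)$ with $A(f)=\Ep[\Eu[\ell(f(\bxp,\bxu))]]$ and $B(f)=\Ep[\Epb[\ell(f(\bxp,\bbarxp))]]$, the triangle inequality reduces the problem to uniformly bounding the two empirical averages separately, with the correct prefactors $1/\thetan$ and $\thetap/\thetan$. Replacing a single sample changes each empirical average by at most $C_\ell/\np$ or $C_\ell/\nun$, so McDiarmid's inequality (applied at confidence $\delta/2$ for each of the two pieces so that a union bound yields $\delta$) contributes the $\sqrt{\log(2/\delta)/n}$-type terms with the stated constants. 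For the expectations of the suprema, the standard symmetrization argument reduces them to Rademacher complexities of $\ell\circ\cF$ evaluated on pairs; Talagrand's contraction lemma peels off $\ell$ with a factor $L$, and the linear parametrization $f(\bx,\bx')=\bw^\top(\bphi(\bx)-\bphi(\bx'))$ together with $\|\bw\|\leq C_{\bw}$ and $\|\bphi\|\leq C_{\bphi}$ then yields the usual $C_{\bw}C_{\bphi}/\sqrt{n}$-type bound on the Rademacher complexity, assembling into the $h(\delta/2)$ factor.

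The main obstacle is obtaining the sharp $1/\sqrt{\min(\np,\nun)}$ rate for the first term, since the empirical version of $A(f)$ is a two-sample $U$-statistic averaging over $\np\cdot\nun$ pairs and a naive Rademacher bound over the product index set does not produce this rate. The clean remedy is a reduction argument in the style of \citet{ICML:Sakai+etal:2017}: write the pairwise empirical average as an average of one-sample empirical averages (fixing one index at a time), and apply the one-sample Rademacher bound in whichever direction has fewer samples; this yields rates $1/\sqrt{\np}$ and $1/\sqrt{\nun}$ whose minimum gives the claimed $1/\sqrt{\min(\np,\nun)}$ factor. The empirical version of $B(f)$ is a one-sample $U$-statistic on $\cXp$, so only the rate $1/\sqrt{\np}$ arises naturally, producing the second summand. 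Finally, the NU-AUC bound follows by an entirely symmetric argument in which the roles of $\cXp$ and $\cXn$ (and correspondingly $\thetap$ and $\thetan$) are exchanged, so no separate derivation is required.
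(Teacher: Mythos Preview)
Your overall architecture is correct and matches the paper's: reduce $I(f)\le R_\PU(f)$ via $\ellzo\le\ell$ and Eq.~\eqref{eq:pu-risk}, split $R_\PU-\Rh_\PU$ into the PU and PP pieces with coefficients $1/\thetan$ and $\thetap/\thetan$, bound each uniformly with confidence $\delta/2$, and take a union bound; the NU case is symmetric. The only substantive difference is the source of the uniform deviation bound for each piece. The paper does not rederive it from McDiarmid plus symmetrization; instead it invokes Theorem~7 of \citet{NIPS:Usunier+etal:2006} (restated here as a single lemma), which directly gives $R(f)-\Rh(f)\le h(\delta)/\sqrt{\min(n,n')}$ for bipartite pairwise losses over linear classes, already with the constants that define $h(\cdot)$. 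Your from-scratch route would work, but note two small points: (i) the paper's theoretical analysis treats the PP term as a genuine two-sample average over an \emph{independent} copy $\{\bbarxp_i\}$ of the positive set (see Appendix~\ref{app:emp_pu_risk}), so it is again handled by the same bipartite lemma with $n=n'=\np$, rather than as a one-sample $U$-statistic; (ii) recovering the exact constants in $h(\delta)$ from McDiarmid plus Talagrand contraction is delicate, whereas citing \citet{NIPS:Usunier+etal:2006} gives them for free.
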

Theorem \ref{thm:pu-nu-gen-err} guarantees that 
$I(f)$ can be bounded from above by the empirical risk, 
$\Rh(f)$, plus the confidence terms of order 
\begin{align*}
\cO_p\Big(\frac{1}{\sqrt{\np}}+\frac{1}{\sqrt{\nun}}\Big)  
~~\mathrm{and}~~	
\cO_p\Big(\frac{1}{\sqrt{\nn}}+\frac{1}{\sqrt{\nun}}\Big) .
\end{align*}
Since $\np$ ($\nn$) and $\nun$ can increase independently 
in our setting, 
this is the optimal convergence rate without 
any additional assumptions
\citep{book:Vapnik:1998,TIT:Mendelson:2008}.

For the PNPU-AUC and PNNU-AUC risks,
we prove the following generalization error bounds
(its proof is also available in Appendix~\ref{proof:gen-err}):
\begin{theorem}
\label{thm:punu-pnu-gen-err}
For any $\delta>0$, the following inequalities hold separately
with probability at least $1-\delta$ for all $f\in\cF$: 
\begin{align*}
I(f)&\leq\Rh_\PNPU^\gamma(f)+h(\delta/3)
	\Big( 
	\frac{1-\gamma}{\sqrt{\min(\np,\nn)}}
	+\frac{\gamma}{\thetan\sqrt{\min(\np,\nun)}}
	+\frac{\gamma\thetap}{\thetan\sqrt{\np}} 
	\Big) , \\	
I(f)&\leq\Rh_\PNNU^\gamma(f)+h(\delta/3)
	\Big(
	\frac{1-\gamma}{\sqrt{\min(\np,\nn)}}
	+\frac{\gamma}{\thetap\sqrt{\min(\nn,\nun)}}
	+\frac{\gamma\thetan}{\thetap\sqrt{\nn}} 
	\Big) .
\end{align*}
where $\Rh_\PNPU^\gamma$ and $\Rh_\PNNU^\gamma$ 
are unbiased empirical risk estimators corresponding to
$R_\PNPU^\gamma$ and $R_\PNNU^\gamma$, respectively.
\end{theorem}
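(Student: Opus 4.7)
The plan is to exploit the identity $R_\PN(f) = R_\PU(f) = R_\NU(f)$ (as population risks), which was established in Eqs.~\eqref{eq:pu-risk} and~\eqref{eq:nu-risk}. This immediately gives $R_\PN(f) = (1-\gamma)R_\PN(f) + \gamma R_\PU(f) = R_\PNPU^\gamma(f)$, and analogously for the PNNU case. Combining this with the assumption $\ellzo(m) \leq \ell(m)$ yields
\begin{align*}
I(f) \;\leq\; R_\PN(f) \;=\; R_\PNPU^\gamma(f),
\end{align*}
so it suffices to bound the one-sided uniform deviation $\sup_{f\in\cF}\bigl(R_\PNPU^\gamma(f) - \Rh_\PNPU^\gamma(f)\bigr)$.

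Next I would decompose this deviation into three natural pieces matching the three confidence terms in the statement. Writing out
\begin{align*}
R_\PNPU^\gamma - \Rh_\PNPU^\gamma
&= (1-\gamma)\bigl(R_\PN - \Rh_\PN\bigr)
+ \tfrac{\gamma}{\thetan}\bigl(\Ep\Eu[\ell] - \widehat{\Ep\Eu}[\ell]\bigr)
- \tfrac{\gamma\thetap}{\thetan}\bigl(\Ep\Epb[\ell] - \widehat{\Ep\Epb}[\ell]\bigr),
\end{align*}
each term is exactly the kind of pairwise empirical process already controlled in the proof of Theorem~\ref{thm:pu-nu-gen-err}. Specifically, I would invoke the Rademacher-complexity-based uniform bound for the PN term (contributing a $1/\sqrt{\min(\np,\nn)}$ rate), for the P--U pairwise term (contributing $1/\sqrt{\min(\np,\nun)}$), and for the P--P pairwise term (contributing $1/\sqrt{\np}$ since only positive data is used). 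Lipschitzness of $\ell$ with constant $L$, boundedness by $C_\ell$, and the bounded linear-in-parameter class $\cF$ are the ingredients that produce exactly the factor $h(\cdot) = 2\sqrt{2}LC_\ell C_{\bw}C_{\bphi} + \tfrac{3}{2}\sqrt{2\log(2/\cdot)}$.

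Finally, I would apply the union bound: requiring each of the three one-sided Rademacher concentration events to fail with probability at most $\delta/3$ and summing gives the overall $1-\delta$ guarantee, with the $h(\delta/3)$ appearing in the statement. The PNNU case is identical after swapping the roles of positive and negative samples and invoking the NU analogue in Theorem~\ref{thm:pu-nu-gen-err}.

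The main technical obstacle is really only bookkeeping: the three empirical averages share sample sets (both the PN and PU terms depend on $\cXp$, and the PU term itself uses $\cXp$ twice in its P--P component), so the three Rademacher complexity terms are not independent. However, this causes no problem for the argument because the union bound only requires each supremum deviation to be individually controlled — we are not trying to exploit independence to sharpen constants. Reusing the per-term bounds from the proof of Theorem~\ref{thm:pu-nu-gen-err} therefore suffices, and no new concentration machinery is needed beyond what was already developed for the PU/NU bounds.
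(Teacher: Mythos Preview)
Your proposal is correct and follows essentially the same route as the paper: decompose $R_\PNPU^\gamma - \Rh_\PNPU^\gamma$ into the PN, P--U, and P--P pairwise pieces, apply the uniform deviation bound (the paper's Theorem~\ref{thm:mn-risk-bounds}, based on \citet{NIPS:Usunier+etal:2006}) to each with failure probability $\delta/3$, take a union bound, and finish with $I(f)\le R_\PN(f)=R_\PNPU^\gamma(f)$ via $\ellzo\le\ell$. Your remark that shared samples across the three terms are harmless because only a union bound is used is exactly right, and your treatment of the PNNU case by symmetry matches the paper's as well.
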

Theorem \ref{thm:punu-pnu-gen-err} guarantees that 
$I(f)$ can be bounded from above by the empirical risk, 
$\Rh(f)$, plus the confidence terms of order
\begin{align*}
\cO_p\Big(\frac{1}{\sqrt{\np}}+\frac{1}{\sqrt{\nn}}
	+\frac{1}{\sqrt{\nun}}\Big) .
\end{align*}
Again, since $\np$, $\nn$, and $\nun$ can increase independently
in our setting, this is the optimal convergence rate
without any additional assumptions.

\subsection{Variance Reduction}
\label{sec:theory-var}
In the existing semi-supervised classification method based on PU learning, 
the variance of the empirical risk was proved to be smaller 
than the supervised counterpart 
under certain conditions \citep{ICML:Sakai+etal:2017}.
Similarly, we here investigate
if the proposed semi-supervised risk estimators 
have smaller variance than its supervised counterpart.

Let us introduce the following variances and 
covariances:\footnote{
$\Var\nolimits_\PN$, $\Var\nolimits_\mathrm{P\widebar{P}}$,
and $\Var\nolimits_\mathrm{N\widebar{N}}$ 
are the variances over $\pp(\bxp)\pn(\bxn)$,
$\pp(\bxp)\pp(\bbarxp)$, 
and $\pn(\bxn)\pn(\bbarxn)$, respectively.
$\Cov\nolimits_\mathrm{PN,P\widebar{P}}$,
$\Cov\nolimits_\mathrm{PN,N\widebar{N}}$, 
$\Cov\nolimits_\mathrm{PU,P\widebar{P}}$,
and $\Cov\nolimits_\mathrm{NU,N\widebar{N}}$ 
are the covariances over $\pp(\bxp)\pn(\bxn)\pp(\bbarxp)$,
$\pp(\bxp)\pn(\bxn)\pn(\bbarxn)$,
$\pp(\bxp)p(\bxu)\pp(\bbarxp)$,
and $\pn(\bxn)p(\bxu)\pn(\bbarxn)$, respectively.
}
\begin{align*}
\sigmapn^2(f)&=\Var\nolimits_\mathrm{PN}
	[\ell(f(\bxp,\bxn))], \\
\sigmapp^2(f)&=\Var\nolimits_\mathrm{P\widebar{P}}
	[\ell(f(\bxp,\bbarxp))], \\
\sigmann^2(f)&=\Var\nolimits_\mathrm{N\widebar{N}} 
	[\ell(f(\bxn,\bbarxn))], \\
\taupnpp(f)&=\Cov\nolimits_\mathrm{PN,P\widebar{P}}
 	[\ell(f(\bxp,\bxn)), \ell(f(\bxp,\bbarxp))] , \\
\taupnnn(f)&=\Cov\nolimits_\mathrm{PN,N\widebar{N}}
 	[\ell(f(\bxp,\bxn)), \ell(f(\bxn,\bbarxn))] , \\ 
\taupupp(f)&=\Cov\nolimits_\mathrm{PU,P\widebar{P}}
 	[\ell(f(\bxp,\bxu)), \ell(f(\bxp,\bbarxp))] , \\
\taununn(f)&=\Cov\nolimits_\mathrm{NU,N\widebar{N}}
 	[\ell(f(\bxp,\bxu)), \ell(f(\bxn,\bbarxn))] .
\end{align*}

Then, we have the following theorem
(its proof is available in Appendix~\ref{proof:var}):
\begin{theorem}
\label{thm:var-red-pnpu-pnnu}
Assume $\nun\to\infty$.
For any fixed $f$,
the minimizers of the variance of the empirical PNPU-AUC and PNNU-AUC risks
are respectively obtained by
\begin{align}
\gamma_\PNPU&=\argmin_{\gamma} \Var[\Rh_\PNPU^\gamma(f)] 
=\frac{\psipn-\psipp/2}{\psipn+\psipu-\psipp} , 
\label{eq:gam-pnpu} \\
\gamma_\PNNU&=\argmin_{\gamma} \Var[\Rh_\PNNU^\gamma(f)]
=\frac{\psipn-\psinn/2}{\psipn+\psinu-\psinn} ,
\label{eq:gam-pnnu}
\end{align}
where 
\begin{align*}
\psipn&=\frac{1}{\np\nn}\sigmapn^2(f),   \\
\psipu&=\frac{\thetap^2}{\thetan^2\np^2}\sigmapp^2(f)
	-\frac{\thetap}{\thetan^2\np}\taupupp(f) , \\
\psipp&=\frac{1}{\thetan\np}\taupnpu(f)
	-\frac{\thetap}{\thetan\np}\taupnpp(f), \\
\psinu&=\frac{\thetan^2}{\thetap^2\nn^2}\sigmann^2(f)
	-\frac{\thetan}{\thetap^2\nn}\taununn(f) , \\
\psinn&=\frac{1}{\thetap\nn}\taupnnu(f) 
	-\frac{\thetan}{\thetap\nn}\taupnnn(f) . 
\end{align*}
Additionally, we have
$\Var[\Rh_\PNPU^\gamma(f)]<\Var[\Rh_\PN(f)]$
for any $\gamma\in(0,2\gamma_\PNPU)$ 
if $\psipn+\psipu>\psipp$ and $2\psipn>\psipp$.
Similarly, we have
$\Var[\Rh_\PNNU^\gamma(f)]<\Var[\Rh_\PN(f)]$
for any $\gamma\in(0,2\gamma_\PNNU)$
if $\psipn+\psinu>\psinn$ and $2\psipn>\psinn$.
\end{theorem}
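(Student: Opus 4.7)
The key observation is that $\Rh_\PNPU^\gamma(f)=(1-\gamma)\Rh_\PN(f)+\gamma\Rh_\PU(f)$ is a convex combination in $\gamma$, so bilinearity of covariance gives
\[
\Var[\Rh_\PNPU^\gamma(f)] = (1-\gamma)^2\Var[\Rh_\PN(f)]+\gamma^2\Var[\Rh_\PU(f)]+2\gamma(1-\gamma)\Cov[\Rh_\PN(f),\Rh_\PU(f)],
\]
a quadratic in $\gamma$. The heart of the argument is to evaluate the three ingredients so that they collapse to $\psipn$, $\psipu$, and $\psipp/2$, respectively, after which the quadratic takes the clean form $(1-\gamma)^2\psipn+\gamma^2\psipu+\gamma(1-\gamma)\psipp$; the formula for $\gamma_\PNPU$ is then simply the stationary point $\partial/\partial\gamma=0$, and the reduction statement reduces to a quadratic sign analysis.

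To obtain $\Var[\Rh_\PN]=\psipn$, I would use the standard computation for a double-sum estimator over independent P- and N-samples. For $\Var[\Rh_\PU]$, I would decompose $\Rh_\PU$ into its P--U part and P--$\widebar{\mathrm{P}}$ part and expand the variance into the two individual variances plus a cross covariance. The limit $\nun\to\infty$ is essential here: it wipes out the contribution to the P--U variance driven purely by the randomness of the unlabeled average (which decays like $1/\nun$), leaving only the piece in which the same positive sample appears in both P--U and P--$\widebar{\mathrm{P}}$ slots, producing the $\taupupp$ coefficient. Assembling the surviving terms reproduces $\psipu$. Likewise, for $\Cov[\Rh_\PN,\Rh_\PU]$, the nonzero contributions arise only from index-sharing between the PN estimator's P-slot and the two pieces of $\Rh_\PU$, yielding $\taupnpu$ and $\taupnpp$ with precisely the prefactors appearing in $\psipp/2$.

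Once these identifications are in place, let $V(\gamma):=\Var[\Rh_\PNPU^\gamma(f)]$; solving $V'(\gamma)=0$ gives $\gamma_\PNPU=(\psipn-\psipp/2)/(\psipn+\psipu-\psipp)$. For the variance-reduction claim I would then examine
\[
\Delta(\gamma):=V(\gamma)-V(0)=\gamma\,[\,(\psipn+\psipu-\psipp)\gamma-(2\psipn-\psipp)\,],
\]
a quadratic with roots $0$ and $2\gamma_\PNPU$. Under $\psipn+\psipu>\psipp$ the parabola opens upward, and under $2\psipn>\psipp$ the second root $2\gamma_\PNPU$ is strictly positive, so $\Delta(\gamma)<0$ on the entire open interval $(0,2\gamma_\PNPU)$, which is exactly the claimed inequality. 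The PNNU half is entirely symmetric under the P$\leftrightarrow$N swap, yielding the conditions $\psipn+\psinu>\psinn$ and $2\psipn>\psinn$.

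The main technical obstacle is the calculation of $\Var[\Rh_\PU]$ and $\Cov[\Rh_\PN,\Rh_\PU]$, because $\Rh_\PU$ is a difference of two double-sum empirical averages that share positive samples, so the bookkeeping of which index configurations produce nonzero contributions (and with what multiplicities in the $\np$- and $\np(\np-1)$-normalizations) must be carried out carefully. The assumption $\nun\to\infty$ is what makes these expressions close cleanly into the stated $\psipu$ and $\psipp$, because otherwise one would retain extra $\nun^{-1}$ pieces that would prevent the variance from reducing to such a compact quadratic in $\gamma$.
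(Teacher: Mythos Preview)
Your proposal is correct and follows essentially the same route as the paper: expand $\Var[\Rh_\PNPU^\gamma(f)]$ into the quadratic $(1-\gamma)^2\psipn+\gamma^2\psipu+(1-\gamma)\gamma\psipp$ using the $\nun\to\infty$ limit to kill the $1/\nun$ terms, then differentiate to obtain $\gamma_\PNPU$. In fact your treatment of the ``Additionally'' clause---writing $\Delta(\gamma)=\gamma[(\psipn+\psipu-\psipp)\gamma-(2\psipn-\psipp)]$ and reading off the sign from the two roots---is more explicit than the paper's own appendix, which stops after deriving the minimizer and does not spell out the quadratic sign analysis.
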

This theorem means that, if $\gamma$ is chosen appropriately, 
our proposed risk estimators, $\Rh_\PNPU^\gamma$ and $\Rh_\PNNU^\gamma$,
have smaller variance than the standard supervised risk estimator
$\Rh_\PN$.
A practical consequence of Theorem~\ref{thm:var-red-pnpu-pnnu} 
is that when we conduct cross-validation for hyperparameter selection,  
we may use our proposed risk estimators 
$\Rh_\PNPU^\gamma$ and $\Rh_\PNNU^\gamma$ 
instead of the standard supervised risk
estimator $\Rh_\PN$ since they are more stable 
(see Section~\ref{sec:impl-cv} for details).

\section{Practical Implementation}
\label{sec:impl}
In this section, we explain the implementation details of 
our proposed methods.

\subsection{General Case}
In practice, 
the AUC risks $R$ introduced above are replaced with
their empirical version $\Rh$, where the expectations in $R$ are 
replaced with the corresponding sample averages. 

Here, we focus on the linear-in-parameter model given by
\begin{align*}
g(\bx)=\sum^b_{\ell=1}w_\ell \phi(\bx)=\bw^\top\bphi(\bx) ,
\end{align*}
where ${}^\top$ denotes the transpose of vectors and matrices,
$b$ is the number of basis functions, 
$\bw=(w_1,\ldots,w_b)^\top$ is a parameter vector, 
and $\bphi(\bx)=(\phi_1(\bx),\ldots,\phi_b(\bx))^\top$ 
is a basis function vector.
The linear-in-parameter model allows us to 
express the composite classifier as
\begin{align*}
f(\bx,\bx')=\bw^\top\bbarphi(\bx,\bx') ,
\end{align*} 
where 
\begin{align*}
\bbarphi(\bx,\bx'):=\bphi(\bx)-\bphi(\bx')
\end{align*}
is a composite basis function vector.
We train the classifier by minimizing the $\ell_2$-regularized 
empirical AUC risk:
\begin{align*}
\min_{\bw} \Rh(f) + \lambda \|\bw\|^2 , 
\end{align*} 
where $\lambda\geq 0$ is the regularization parameter.

\subsection{Analytical Solution for Squared Loss}
For the squared loss $\ell_S(m):=(1-m)^2$, 
the empirical PU-AUC risk\footnote{We discuss the way of estimating 
the PU-AUC risk in Appendix~\ref{app:emp_pu_risk}. 
} can be expressed as
\begin{align*}
\Rh_\PU(f)&=\frac{1}{\thetan\np\nun}\sum^\np_{i=1}\sum^\nun_{k=1}
	\ells(f(\bxp_i,\bxu_k)) \\ 
	&\phantom{=}-\frac{\thetap}{\thetan\np(\np-1)}
	\sum^\np_{i=1}\sum^\np_{i'=1}\ells(f(\bxp_i,\bxp_{i'}))
	+ \frac{\thetap}{\thetan(\np-1)}\\
	&=1-2\bw^\top\bhh_\PU + \bw^\top\bHh_\PU\bw 
		- \bw^\top\bHh_\PP\bw ,	
\end{align*} 
where 
\begin{align*}
\bhh_\PU&:=\frac{1}{\thetan\np}\bPhip^\top\bone_\np 
	- \frac{1}{\thetan\nun}\bPhiu^\top\bone_\nun , \\
\bHh_\PU&:=\frac{1}{\thetan\np}\bPhip^\top\bPhip
	-\frac{1}{\thetan\np\nun}\bPhiu^\top\bone_\nun\bone_\np^\top\bPhip \\
	&\phantom{:=}
	-\frac{1}{\thetan\np\nun}\bPhip^\top\bone_\np\bone_\nun^\top\bPhiu
	+\frac{1}{\thetan\nun}\bPhiu^\top\bPhiu , \\
\bHh_\PP&:=\frac{2\thetap}{\thetan(\np-1)}\bPhip^\top\bPhip 
	- \frac{2\thetap}{\thetan\np(\np-1)}
	\bPhip^\top\bone_\np\bone_\np^\top\bPhip , \\
\bPhip&:=(\bphi(\bxp_1), \ldots, \bphi(\bxp_\np))^\top , \\	
\bPhiu&:=(\bphi(\bxu_1), \ldots, \bphi(\bxu_\nun))^\top ,
\end{align*}
and $\bone_b$ is the $b$-dimensional vector whose elements are all one.
With the $\ell_2$-regularizer,
we can analytically obtain the solution by
\begin{align*}
\widehat{\bw}_\PU
	:=(\bHh_\PU-\bHh_\PP+\lambda\bI_b)^{-1}\bhh_\PU ,
\end{align*}
where $\bI_b$ is the $b$-dimensional identity matrix.

The computational complexity of 
computing $\bhh_\PU$, $\bHh_\PU$, and $\bHh_\PP$ are
$\cO((\np+\nun)b)$,  $\cO((\np+\nun)b^2)$, and $\cO(\np b^2)$, respectively.
Then, solving a system of linear equations 
to obtain the solution $\widehat{\bw}_\PU$ 
requires the computational complexity of $\cO(b^3)$. 
In total,
the computational complexity of this PU-AUC optimization method is 
$\cO((\np+\nun)b^2+b^3)$.

As given by Eq.~\eqref{eq:pnu-risk}, 
our PNU-AUC optimization method consists of the
PNPU-AUC risk and the PNNU-AUC risk. 
For the squared loss $\ell_S(m):=(1-m)^2$, 
the empirical PNPU-AUC risk can be expressed as 
\begin{align*}
\Rh_\PNPU^\gamma(f)&=\frac{1-\gamma}{\np\nn}\sum^\np_{i=1}\sum^\nn_{j=1}
	\ells(f(\bxp_i,\bxn_j)) 
	+\frac{\gamma}{\thetan\np\nun}\sum^\np_{i=1}\sum^\nun_{k=1}
	\ells(f(\bxp_i,\bxu_k)) \\	
	&\phantom{=} 
	-\frac{\gamma\thetap}{\thetan\np(\np-1)}\sum^\np_{i=1}\sum^\np_{i'=1}
	\ells(f(\bxp_i,\bxp_{i'})) + \frac{\gamma\thetap}{\thetan(\np-1)} \\
	&=(1-\gamma)-2(1-\gamma)\bw^\top\bhh_\PN 
	+ (1-\gamma)\bw^\top\bHh_\PN\bw \\
	&\phantom{=} 
	+\gamma - 2\gamma\bw^\top\bhh_\PU 
		+ \gamma\bw^\top\bHh_\PU\bw - \gamma\bw^\top\bHh_\PP\bw ,	
\end{align*} 
where 
\begin{align*}
\bhh_\PN&:=\frac{1}{\np}\bPhip^\top\bone_\np 
	- \frac{1}{\nn}\bPhin^\top\bone_\nn , \\
\bHh_\PN&:=\frac{1}{\np}\bPhip^\top\bPhip
	-\frac{1}{\np\nn}\bPhip^\top\bone_\np\bone_\nn^\top\bPhin \\
	&\phantom{:=}	
	-\frac{1}{\np\nn}\bPhin^\top\bone_\nn\bone_\np^\top\bPhip
	+\frac{1}{\nn}\bPhin^\top\bPhin , \\
\bPhin&:=(\bphi(\bxn_1), \ldots, \bphi(\bxn_\nn))^\top . \\	
\end{align*}
The solution for the $\ell_2$-regularized PNPU-AUC optimization
can be analytically obtained by
\begin{align*}
\widehat{\bw}_\PNPU^\gamma
	:=\Big((1-\gamma)\bHh_\PN + \gamma\bHh_\PU - \gamma\bHh_\PP
	+\lambda\bI_b \Big)^{-1}
	\Big((1-\gamma)\bhh_\PN + \gamma\bhh_\PU
	\Big) .
\end{align*}
Similarly, the solution for 
the $\ell_2$-regularized PNNU-AUC optimization can be obtained by
\begin{align*}
\widehat{\bw}_\PNNU^\gamma
	:=\Big((1-\gamma)\bHh_\PN + \gamma\bHh_\NU - \gamma\bHh_\NN
	+\lambda\bI_b \Big)^{-1}
	\Big((1-\gamma)\bhh_\PN + \gamma\bhh_\NU
	\Big) .
\end{align*} 
where 
\begin{align*}
\bhh_\NU&:=\frac{1}{\thetap\nun}\bPhiu^\top\bone_\nun 
	-\frac{1}{\thetap\nn}\bPhin^\top\bone_\nn , \\
\bHh_\NU&:=\frac{\thetan}{\thetap\nn}\bPhin^\top\bPhin
	-\frac{\thetan}{\thetap\nn\nun}\bPhiu^\top\bone_\nun\bone_\nn^\top\bPhin \\
	&\phantom{:=}
	-\frac{\thetan}{\thetap\nn\nun}\bPhin^\top\bone_\nn\bone_\nun^\top\bPhiu
	+\frac{\thetan}{\thetap\nun}\bPhiu^\top\bPhiu , \\
\bHh_\NN&:=\frac{2\thetan}{\thetap(\nn-1)}\bPhin^\top\bPhin 
	- \frac{2\thetan}{\thetap\nn(\nn-1)}
	\bPhin^\top\bone_\nn\bone_\nn^\top\bPhin . \\
\end{align*}

The computational complexity of 
computing $\bhh_\PN$ and $\bHh_\PN$ are
$\cO((\np+\nn)b)$ and  $\cO((\np+\nn)b^2)$, respectively. 
Then, obtaining the solution
$\widehat{\bw}_\PNPU$ ($\widehat{\bw}_\PNNU$)
requires the computational complexity of $\cO(b^3)$.
Including the computational complexity of computing
$\bhh_\PU$, $\bHh_\PU$, and $\bHh_\PP$,
the total computational complexity of the PNPU-AUC optimization method is 
$\cO((\np+\nn+\nun)b^2+b^3)$. 
Similarly, the total computational complexity of the PNNU-AUC optimization
method is $\cO((\np+\nn+\nun)b^2+b^3)$.
Thus, the computational complexity of the PNU-AUC optimization method 
is $\cO((\np+\nn+\nun)b^2+b^3)$.  

From the viewpoint of computational complexity, the squared loss and 
the exponential loss are more efficient than the logistic loss
because these loss functions reduce the nested summations
to individual ones.
More specifically, for example, in the PU-AUC optimization method, 
the logistic loss requires $\cO(\np\nun)$ operations 
for evaluating the first term in the PU-AUC risk, i.e.,
the loss over positive and unlabeled samples.  
In contrast, the squared loss and exponential loss 
reduce the number of operations for 
loss evaluation to $\cO(\np+\nun)$.\footnote{
For example, 
the exponential loss over positive and unlabeled data  
can be computed as follows:
\begin{align*}
\sum^\np_{i=1}\sum^{\nun}_{k=1}\elle(f(\bxp_i,\bxu_k))
&=\sum^\np_{i=1}\sum^{\nun}_{k=1}\exp(-g(\bxp_i)
+g(\bxu_k)) \\
&=\sum^\np_{i=1}\exp(-g(\bxp_i))\sum^{\nun}_{k=1}\exp(g(\bxu_k)).
\end{align*} 
Thus, the number of operations for loss evaluation 
is reduced to $\np+\nun+1$ rather than $\np\nun$.
} This property is beneficial especially when we handle 
large scale data sets.

\subsection{Cross-Validation}
\label{sec:impl-cv}
To tune the hyperparameters such as the regularization parameter $\lambda$,
we use the cross-validation.

For the PU-AUC optimization method, 
we use the PU-AUC risk in Eq.~\eqref{eq:pu-risk} 
with the zero-one loss as the cross-validation score.

For the PNU-AUC optimization method,
we use the PNU-AUC risk in Eq.~\eqref{eq:pnu-risk}
with the zero-one loss as the score.
To this end, however, we need to fix the combination parameter 
$\eta$ in the cross-validation score
in advance and then, we tune the hyperparameters 
including the combination parameter.
More specifically, let $\widebar{\eta}\in\!\![-1,1]$ 
be the predefined combination parameter.
We conduct cross-validation with respect to $R_\PNU^{\widebar{\eta}}(f)$
for tuning the hyperparameters.
Since the PNU-AUC risk is equivalent to the PN-AUC risk 
for any $\widebar{\eta}$, we can choose any $\widebar{\eta}$ in principle.
However, when the empirical PNU-AUC risk is used in practice, 
choice of $\widebar{\eta}$ may affect the performance of cross-validation.

Here, based on the theoretical result of variance reduction given 
in Section~\ref{sec:theory-var},
we give a practical method to determine $\widebar{\eta}$. 
Assuming the covariances, e.g., $\taupnpp(f)$, are small enough 
to be neglected
and $\sigmapn(f)=\sigmapp(f)=\sigmann(f)$,	 
we can obtain a simpler form of Eqs.~\eqref{eq:gam-pnpu} and \eqref{eq:gam-pnnu}
as 
\begin{align*}
\widebar{\gamma}_\PNPU&=
	\frac{1}	{1+\thetap^2\nn/(\thetan^2\np)} ,  \\
\widebar{\gamma}_\PNNU&=
	\frac{1}{1+\thetan^2\np/(\thetap^2\nn)} .	
\end{align*}
They can be computed simply from
the number of samples and the (estimated) class-prior.
Finally, to select the combination parameter $\eta$,
we use $\Rh_\PNPU^{\widebar{\gamma}_\PNPU}$ for $\eta\geq0$,
and $\Rh_\PNNU^{\widebar{\gamma}_\PNNU}$ for $\eta<0$.

\section{Experiments}
\label{sec:exp}
In this section, we numerically investigate the behavior 
of the proposed methods and evaluate their performance on 
various data sets. 
All experiments were carried out using a PC equipped with
two $2.60$GHz Intel\textsuperscript{\textregistered} 
Xeon\textsuperscript{\textregistered} E$5$-$2640$ v$3$ CPUs.

As the classifier,
we used the linear-in-parameter model.
In all experiments except text classification tasks, 
we used the Gaussian kernel basis function expressed as
\begin{align*}
\phi_\ell(\bx)=\exp\Big(-\frac{\|\bx-\bx_\ell\|^2}{2\sigma^2}\Big),
\end{align*}  
where $\sigma>0$ is the Gaussian bandwidth,
$\{\bx_\ell\}^b_{\ell=1}$ are the samples 
randomly selected from training samples $\{\bx_i\}^n_{i=1}$
and $n$ is the number of training samples.
In text classification tasks,  
we used the linear kernel basis function:
\begin{align*}
\phi_\ell(\bx)=\bx^\top\bx_\ell.
\end{align*} 
The number of basis functions was set at $b=\min(n, 200)$.
The candidates of the Gaussian bandwidth were 
$\mathrm{median}(\{\|\bx_i-\bx_j\|\}^n_{i,j=1})
\times\{1/8,1/4,1/2,1,2\}$ and that of the regularization parameter 
were $\{10^{-3},10^{-2},10^{-1},10^0,10^1\}$.
All hyper-parameters were determined by five-fold cross-validation.
As the loss function, we used the squared loss function 
$\ells(m)=(1-m)^2$.

\subsection{Effect of Variance Reduction}
\label{sec:eff-var-red}
First, we numerically confirm the effect of variance reduction.
We compare the variance of the empirical PNU-AUC risk against 
the variance of the empirical PN-AUC risk, 
$\Var[\Rh_\PNU^\eta(f)]$ vs.~$\Var[\Rh_\PN(f)]$, 
under a fixed classifier $f$.

As the fixed classifier, we used the minimizer of 
the empirical PN-AUC risk, denoted by $\widehat{f}_\PN$. 
The number of positive and negative samples for training
varied as $(\np,\nn)=(2,8)$, $(10,10)$, and $(18, 2)$. 
We then computed the variance of the empirical PN-AUC and PNU-AUC risks 
with additional $10$ positive, $10$ negative, and $300$ unlabeled samples.
As the data set,
we used the Banana data set \citep{IDA:MLJ:Ratsch+etal:2001}.
In this experiment, the class-prior was set at $\thetap=0.1$ 
and assumed to be known.

Figure \ref{fig:var_range_comb_para_samples} plots
the value of the variance of the empirical PNU-AUC risk 
divided by that of the PN-AUC risk, 
\begin{align*}
r:=\frac{\Var[\Rh_\PNU^\eta(\widehat{f}_\PN)]}
		{\Var[\Rh_\PN(\widehat{f}_\PN)]} ,
\end{align*}
as a function of the combination parameter $\eta$
under different numbers of positive and negative samples.
The results show that $r<1$ can be achieved by 
an appropriate choice of $\eta$, 
meaning that the variance of the empirical PNU-AUC risk can be
smaller than that of the PN-AUC risk.

\begin{figure}[t]
	\centering
 	\subfigure[$-1\leq\eta\leq1$]{%
\ifmlj
 		\includegraphics[clip, width=.5\columnwidth]
 		{all_banana_var_range_pi10.pdf}
\else 		
 		\includegraphics[clip, width=.5\columnwidth]
 		{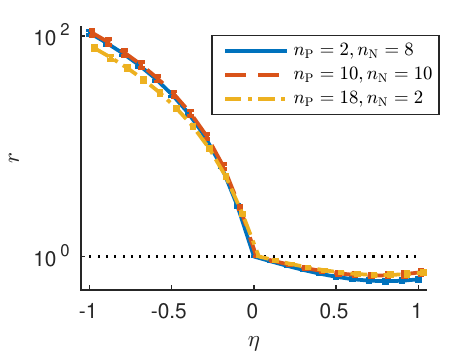}
\fi 		
		}%
 	\subfigure[$\eta>0$]{%
\ifmlj
 		\includegraphics[clip, width=.5\columnwidth]
 		{banana_var_range_pi10.pdf}
\else
 		\includegraphics[clip, width=.5\columnwidth]
 		{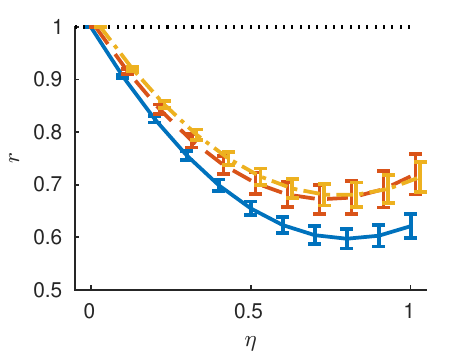}
\fi
		}
	\caption{
	Average with standard error of the ratio 
	between the variance of the empirical PNU risk and that of the PN risk, 
	$r=\Var[\Rh_\PNU^\eta(\fh_\PN)]/\Var[\Rh_\PN(\fh_\PN)]$,
	as a function of the combination parameter $\eta$  
	over $100$ trials on the Banana data set.
	The class-prior is $\thetap=0.1$
	and the number of positive and negative samples varies as
	$(\np,\nn)=(2,8)$, $(10,10)$, and $(18,2)$.
	Left: values of $r$ as a function of $\eta$.
	Right: values for $\eta>0$ are magnified.
	}	
	\label{fig:var_range_comb_para_samples}
\end{figure}

We then investigate how the class-prior affects the variance reduction.
In this experiment, 
the number of positive and negative samples for $\fh_\PN$ 
are $\np=10$ and $\nn=10$, respectively.
Figure~\ref{fig:var_range_comb_para_priors}
showed the values of $r$ as a function of the
combination parameter $\eta$ under different class-priors.
When the class-prior, $\thetap$, is $0.1$ and $0.2$, the variance 
can be reduced for $\eta>0$.
When the class-prior is $0.3$,
the range of the value of $\eta$ that yields 
variance reduction becomes smaller.
However, this may not be
that problematic in practice,
because AUC optimization is effective 
when two classes are highly imbalanced, 
i.e., the class-prior is far from $0.5$;
when the class-prior is close to $0.5$,
we may simply use the standard misclassification rate minimization approach.

\begin{figure}[t]
	\centering
 	\subfigure[$-1\leq\eta\leq1$]{%
\ifmlj 	
 		\includegraphics[clip, width=.5\columnwidth]
 		{all_banana_var_range_np10_nn10.pdf}
\else
 		\includegraphics[clip, width=.5\columnwidth]
 		{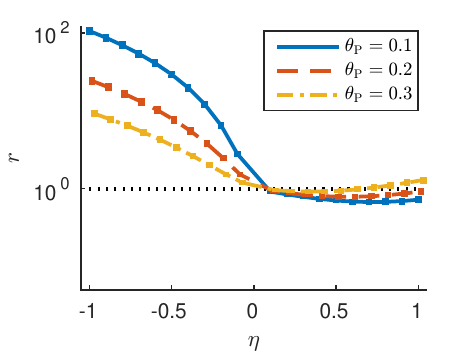}
\fi
		}%
 	\subfigure[$\eta>0$]{%
\ifmlj
 		\includegraphics[clip, width=.5\columnwidth]
 		{banana_var_range_np10_nn10.pdf}
\else
 		\includegraphics[clip, width=.5\columnwidth]
 		{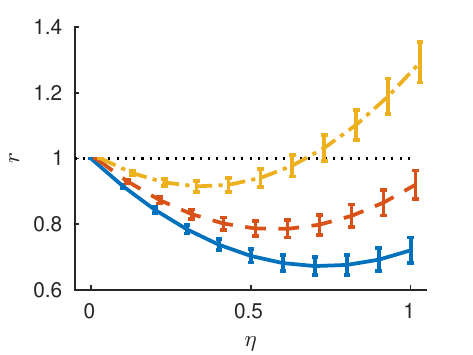}
\fi
		}
	\caption{
	Average with standard error of the ratio 
	between the variance of the PNU-AUC risk and that of the PN-AUC risk, 
	$r=\Var[\Rh_\PNU^\eta(\fh_\PN)]/\Var[\Rh_\PN(\fh_\PN)]$,
	as a function of the combination parameter $\eta$ 
	over $100$ trials
	on the Banana data set.
	A class-prior varies as $\thetap=0.1$, $0.2$, and $0.3$.
	Left: values of $r$ as a function of $\eta$.
	Right: values for $\eta>0$ are magnified.	
	When $\thetap=0.1$, $0.2$,
	the variance of the empirical PNU-AUC risk is smaller than 
	that of the PN risk for $\eta>0$. 
	}	
	\label{fig:var_range_comb_para_priors}
\end{figure}

\subsection{Benchmark Data Sets}
Next, we report the classification performance of the proposed PU-AUC 
and PNU-AUC optimization methods, respectively.
We used $15$ benchmark data sets from
the \emph{IDA Benchmark Repository} \citep{IDA:MLJ:Ratsch+etal:2001},
the \emph{Semi-Supervised Learning Book} \citep{book:Chapelle+etal:2006},
the \emph{LIBSVM} \citep{LibSVM:Chang+etal:2011},
and the \emph{UCI Machine Learning Repository} \citep{UCI:Lichman:2013}.
The detailed statistics of the data sets are summarized in
Appendix~\ref{app:stats_data}.

\subsubsection{AUC Optimization from Positive and Unlabeled Data}
We compared the proposed PU-AUC optimization method against 
the existing AUC optimization method based on the ranking SVM (PU-RSVM)
\citep{CIKM:Sundararajan+etal:2011}.
We trained a classifier with samples of size
$\np=100$ and $\nun=1000$ under the different class-priors
$\thetap=0.1$ and $0.2$.
For the PU-AUC optimization method, 
the squared loss function was used
and the class-prior was estimated by the distribution matching method
\citep{MLJ:duPlessis+etal:2017}.
The results of the estimated class-prior are summarized 
in Table~\ref{tab:bench-pu-pi-ret}.

\begin{table}[t]	
	\centering	
	\caption{Average and standard error of the 
	estimated class-prior over $50$ trials on benchmark data sets
	in PU learning setting.
	}		
	\label{tab:bench-pu-pi-ret}
  	\begin{tabular}{lrrr}
		\toprule 
		Data set 
		& \multicolumn{1}{c}{$d$}		
		& \multicolumn{1}{c}{$\thetap=0.1$}
		& \multicolumn{1}{c}{$\thetap=0.2$} \\
		\toprule				
\expandableinput{np100_nu1000_pi_u10to20_bench_only_pu_pi_ret} 
	\bottomrule
	\end{tabular}	
\end{table}	

Table~\ref{tab:bench-pu} lists the average with standard error of 
the AUC over $50$ trials, showing that
the proposed PU-AUC optimization method achieves better performance
than the existing method.
In particular, when $\thetap=0.2$, 
the difference between PU-RSVM and our method
becomes larger compared with the difference 
when $\thetap=0.1$.
Since PU-RSVM can be interpreted as regarding 
unlabeled data as negative, the bias caused by 
this becomes larger when $\thetap=0.2$. 
  
Figure~\ref{fig:bench-time-pu} summarizes
the average computation time over $50$ trials.
The computation time of the PU-AUC optimization method includes
both the class-prior estimation and the empirical risk minimization.
The results show that  
the PU-AUC optimization method requires almost twice
computation time as that of PU-RSVM,
but it would be acceptable in practice
to obtain better performance.

\begin{table}[t]
	\centering
	\caption{Average and standard error of the AUC 
	over $50$ trials on benchmark data sets.
    The boldface denotes the best and comparable methods in terms of 
    the average AUC according to the t-test at the significance level $5\%$.
    The last row shows the number of best/comparable cases of each method.    
	}		
	\label{tab:bench-pu}
	\begin{tabular}{lrrrrr}
		\toprule
		\multirow{2}{*}{Data set}
		& \multirow{2}{*}{$d$} 
		& \multicolumn{2}{c}{$\thetap=0.1$}
		& \multicolumn{2}{c}{$\thetap=0.2$} \\ 
		\cmidrule(lr){3-4}\cmidrule(lr){5-6}
		& 
 		& \multicolumn{1}{c}{PU-AUC} 		
		& \multicolumn{1}{c}{PU-RSVM}		
 		& \multicolumn{1}{c}{PU-AUC}
		& \multicolumn{1}{c}{PU-RSVM} \\ 		 
		\cmidrule(lr){1-6}
\expandableinput{np100_nu1000_pi_u10to20_bench_only_pu}
  		\cmidrule{1-6}
  		\multicolumn{2}{c}{\#Best/Comp.~} 
   		& \multicolumn{1}{r}{$8$} 
   		& \multicolumn{1}{r}{$0$} 
   		& \multicolumn{1}{r}{$8$} 
   		& \multicolumn{1}{r}{$0$} \\
		\bottomrule	
	\end{tabular}		
\end{table}
\begin{figure}[t]
	\centering
\ifmlj	
	\includegraphics[clip,width=\columnwidth]
	{times_summary_pu.pdf}
\else
	\includegraphics[clip,width=\columnwidth]
	{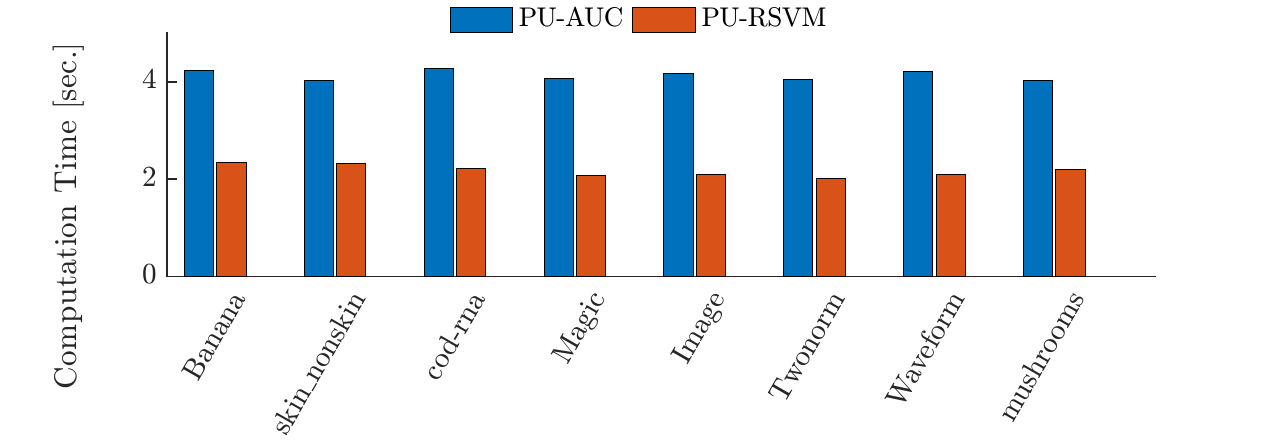}
\fi	
	\caption{
	Average computation time 
	on benchmark data sets when $\thetap=0.1$ over $50$ trials.
	The computation time of the PU-AUC optimization method 
	includes the class-prior estimation and 
	the empirical risk minimization.
	}\label{fig:bench-time-pu}
\end{figure}

\subsubsection{Semi-Supervised AUC Optimization}
\label{sec:exp-ssl}
Here, we compare the proposed PNU-AUC optimization method against 
existing AUC optimization approaches: 
the semi-supervised rankboost (SSRankboost)
\citep{SIGIR:Amini+etal:2008},\footnote{
We used the code available at
\url{http://ama.liglab.fr/~amini/SSRankBoost/}
} 
the semi-supervised AUC-optimized logistic sigmoid (sAUC-LS) 
\citep{ICDM:Fujino+Ueda:2016},\footnote{
This method is equivalent to OptAG without a generative model,
which only employs a discriminative model
with the entropy minimization principle. 
To eliminate the adverse effect of the wrongly chosen
generative model, 
we added this method for comparison.
}
and the optimum AUC with a generative model (OptAG)
\citep{ICDM:Fujino+Ueda:2016}.

We trained the classifier with samples of size
$\np=\thetap\cdot\nl$, $\nn=\nl-\np$, and $\nun=1000$, 
where $\nl$ is the number of labeled samples.
For the PNU-AUC optimization method, 
the squared loss function was used and
the candidates of the combination parameter $\eta$ were 
$\{-0.9, -0.8, \ldots, 0.9\}$.
For the class-prior estimation, we used 
the energy distance minimization method
\citep{IEICE:Kawakubo+etal:2016}.
The results of the estimated class-prior are summarized 
in Table~\ref{tab:bench-pi-ret}.

\begin{table}[t]
	\centering
	\caption{Average and standard error of the 
	estimated class-prior over $50$ trials on benchmark data sets
	in semi-supervised learning setting.
	}		
	\label{tab:bench-pi-ret}
  	\begin{tabular}{lrrr}
		\toprule 
		\multirow{1}{*}{Data set} 
		& \multirow{1}{*}{$\nl$}		
		& \multicolumn{1}{c}{$\thetap=0.1$}
		& \multicolumn{1}{c}{$\thetap=0.2$} \\
		\toprule				
\expandableinput{nl50to100_nu1000_pi_u10to20_bench_pi_ret}
		\bottomrule
	\end{tabular}			
\end{table}

For SSRankboost,
the discount factor and the number of neighbors 
were chosen from $\{10^{-3},10^{-2},10^{-1}\}$
and $\{2,3,\ldots,7\}$, respectively.
For sAUC-LS and OptAG,
the regularization parameter for the entropy regularizer
was chosen from $\{1, 10\}$.
Furthermore, as the generative model of OptAG,  
we adapted the Gaussian distribution for the data distribution
and the Gaussian and Gamma distributions for the prior of
the data distribution.
\footnote{
As the generative model, we used 
the Gaussian distributions for positive and negative classes:
\begin{align*}
p_g(\bxp; \bmup)&\propto \taup^\frac{d}{2}
	\exp\Big(-\frac{\taup}{2}\|\bxp-\bmup\|^2\Big) , \\
p_g(\bxp; \bmun)&\propto \taun^\frac{d}{2}
	\exp\Big(-\frac{\taun}{2}\|\bxn-\bmun\|^2\Big) ,
\end{align*}
where $\taup$ and $\taun$ denote the precisions
and $\bmup$ and $\bmun$ are the means.
As the prior of $\bmup$, $\bmun$, $\taup$, and $\taun$,  
we used the Gaussian and Gamma distributions:
\begin{align*}	
p(\bmup;\bmup^0)&\propto \taup^\frac{d}{2}
	\exp\Big(-\frac{\rhop^0\taup}{2}\|\bmup-\bmup^0\|^2\Big) , \\
p(\bmun;\bmun^0)&\propto \taun^\frac{d}{2}
	\exp\Big(-\frac{\rhon^0\taun}{2}\|\bmun-\bmun^0\|^2\Big) , \\
p(\taup; \ap^0,\bp^0)&\propto \taup^{\ap^0-1}\exp(-\bp^0\taup) , \\ 	 
p(\taun; \an^0,\bn^0)&\propto \taun^{\an^0-1}\exp(-\bn^0\taun) , \\ 	 
\end{align*}
where $\bmup^0$, $\bmu^0$, $\ap^0$, $\bp^0$, $\an^0$, $\bn^0$,
$\rhop^0$, and $\rhon^0$ are the hyperparameters.
}

Table~\ref{tab:bench} lists the average with standard error of 
the AUC over $50$ trials, showing that
the proposed PNU-AUC optimization method 
achieves better performance than or comparable performance to the existing
methods on many data sets. 
 
Figure~\ref{fig:bench-time} summarizes
the average computation time over $50$ trials.
The computation time of the PNU-AUC optimization method 
includes both the class-prior estimation and the empirical risk minimization.
The results show that even though  
our proposed method involves the class-prior estimation, 
the computation time is relatively faster than 
SSRankboost and much faster than sAUC-LS and OptAG.
The reason for longer computation time of sAUC-LS and OptAG is 
that their implementation is based on the logistic loss in which 
the number of operations for loss evaluation is $\cO(\np\nn+\np\nun+\nn\nun)$, 
unlike the PNU-AUC optimization method with the squared loss in which 
the number of operations for loss evaluation is $\cO(\np+\nn+\nun)$ 
(cf.~the discussion about the computational complexity in
Section~\ref{sec:impl}).

\begin{table}[t]
	\centering
	\caption{Average and standard error of the AUC 
	over $50$ trials on benchmark data sets.
    The boldface denotes the best and comparable methods in terms of 
    the average AUC according to the t-test at the significance level $5\%$.
    The last row shows the number of best/comparable cases of each method.
    SSRboost is an abbreviation for SSRankboost.    
	}		
	\label{tab:bench}
 	\resizebox{\columnwidth}{!}{%
   	\begin{tabular}{@{}l@{}r@{\;}c@{}c@{}c@{}c@{}c@{}c@{}c@{}c}
		\toprule 
		\multirow{2}{*}{Data set} 
		& \multirow{2}{*}{$\nl$}		
		& \multicolumn{4}{c}{$\thetap=0.1$}
		& \multicolumn{4}{c}{$\thetap=0.2$} \\
		\cmidrule(lr){3-6}\cmidrule(lr){7-10}
		& 		
 		& \multicolumn{1}{c}{PNU-AUC}
 		& \multicolumn{1}{c}{SSRboost} 
  		& \multicolumn{1}{c}{sAUC-LS} 
 		& \multicolumn{1}{c}{OptAG} 
 		& \multicolumn{1}{c}{PNU-AUC}
 		& \multicolumn{1}{c}{SSRboost}
  		& \multicolumn{1}{c}{sAUC-LS}  
 		& \multicolumn{1}{c}{OptAG} \\ 
		\toprule				
\expandableinput{nl50to100_nu1000_pi_u10to20_bench}
   		\cmidrule{1-10}		 	
   		\multicolumn{2}{c}{\#Best/Comp.~} 
   		& \multicolumn{1}{r}{$20$} 
   		& \multicolumn{1}{r}{$11$} 
   		& \multicolumn{1}{r}{$5$} 
   		& \multicolumn{1}{r}{$13$} 
   		& \multicolumn{1}{r}{$21$} 
   		& \multicolumn{1}{r}{$13$} 
   		& \multicolumn{1}{r}{$7$}
   		& \multicolumn{1}{r}{$9$} \\  		
		\bottomrule
	\end{tabular}		
 	}
\end{table}
\begin{figure}[t]
	\centering
\ifmlj	
	\includegraphics[clip,width=\columnwidth]
	{times_summary.pdf}
\else
	\includegraphics[clip,width=\columnwidth]
	{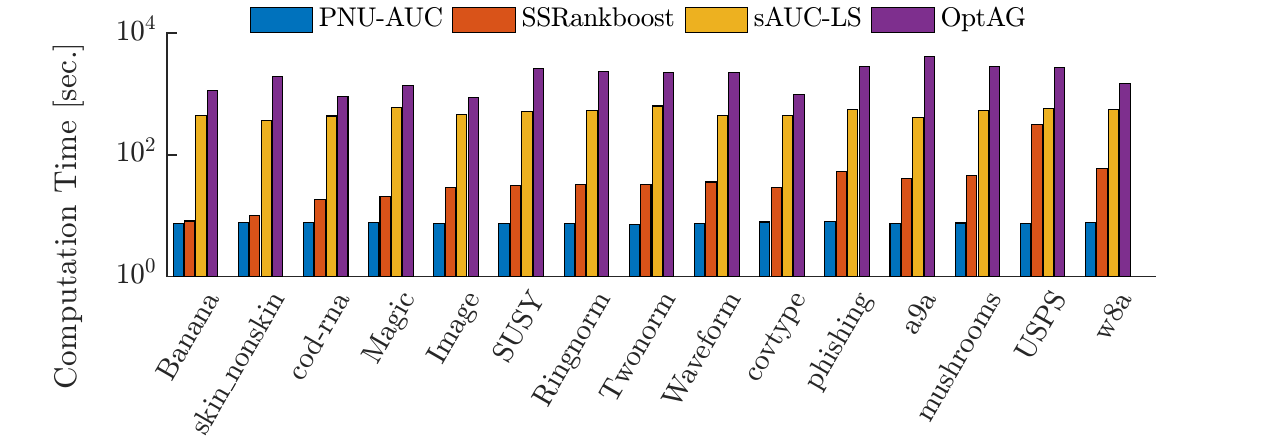}
\fi	
	\caption{
	Average computation time of each method 
	on benchmark data sets when $\nl=100$
	and $\thetap=0.1$ over $50$ trials.
	}\label{fig:bench-time}
\end{figure}

\subsection{Text Classification}
\label{sec:exp-text}
Next, we apply our proposed PNU-AUC optimization method to
a text classification task.
We used the \emph{Reuters Corpus Volume I} data set
\citep{JMLR:Lewis+etal:2004}, 
the \emph{Amazon Review} data set \citep{ICML:Dredze+etal:2008},
and the \emph{$20$ Newsgroups} data set \citep{ICML:Lang:1995}.
More specifically, we used the data set processed for 
a binary classification task: the rcv$1$, amazon$2$, and news$20$ data sets.
The rcv$1$ and news$20$ data sets are available at
the website of LIBSVM \citep{LibSVM:Chang+etal:2011},
and the amazon$2$ is designed by ourselves, which 
consists of the product reviews of books and
music from the \emph{Amazon$\mathit{7}$} data set
\citep{MLJ:Blondel+etal:2013}.
The dimension of a feature vector of the rcv$1$ data set is $47,236$, 
that of the amazon$2$ data set is $262,144$, 
and that of the news$20$ data set is $1,355,191$.

We trained a classifier with samples of size $\np=20$, $\nn=80$,
and $\nun=10,000$.
The true class-prior was set at $\thetap=0.2$ and estimated by
the method based on energy distance minimization
\citep{IEICE:Kawakubo+etal:2016}.
For the generative model of OptAG, we employed naive Bayes (NB) 
multinomial models 
and a Dirichlet prior for the prior distribution of the NB model
as described in \citet{ICDM:Fujino+Ueda:2016}. 

Table~\ref{tab:text} lists the average with standard error of 
the AUC over $20$ trials, showing that the proposed method outperforms
the existing methods. 
Figure~\ref{fig:text-time} summarizes the average computation time of 
each method.
These results show that the proposed method achieves better performance 
with short computation time.

\begin{table}[t]
	\centering
	\caption{Average with standard error of the AUC 
	over $20$ trials on the text classification data sets.
    The boldface denotes the best and comparable methods in terms of 
    the average AUC according to the t-test at the significance level $5\%$.
	}		
	\label{tab:text}
  	\begin{tabular}{lrrrrrr}
		\toprule 		
		Data set 
		& \multicolumn{1}{c}{$d$}		
		& \multicolumn{1}{c}{$\widehat{\theta}_\mathrm{P}$}		
 		& \multicolumn{1}{c}{PNU-AUC}
 		& \multicolumn{1}{c}{SSRankboost} 
  		& \multicolumn{1}{c}{sAUC-LS} 
 		& \multicolumn{1}{c}{OptAG} \\
 		\toprule				
\expandableinput{nl100_nu10000_pi_u20_bench_linear}
		\bottomrule
	\end{tabular}		
\end{table}
\begin{figure}[t]
	\centering
\ifmlj	
	\includegraphics[clip,width=.6\columnwidth]
	{times_summary_text.pdf}
\else
	\includegraphics[clip,width=.6\columnwidth]
	{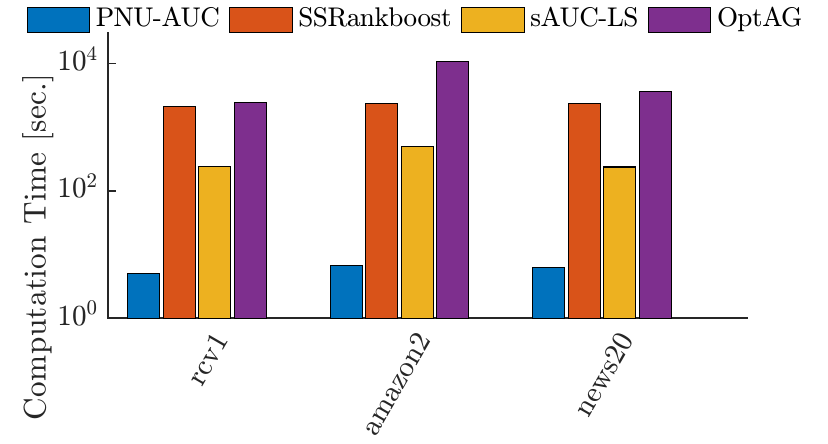}
\fi	
	\caption{
	Average computation time of each method 
	on the text classification data sets.
	}\label{fig:text-time}
\end{figure}

\subsection{Sensitivity Analysis}
Here, we investigate the effect of the estimation accuracy
of the class-prior for the PNU-AUC optimization method.
Specifically, 
we added noise $\rho\in\{-0.09,-0.08,\ldots,0.09\}$ to 
the true class-prior $\thetap$ and used 
$\widehat{\theta}_\mathrm{P}=\thetap+\rho$
as the estimated class-prior for the PNU-AUC optimization method.
Under the different values of the class-prior $\thetap=0.1$,
$0.2$, and $0.3$,
we trained a classifier with samples of size 
$\np=\thetap\cdot50$, $\nn=\thetan\cdot50$, and $\nun=1000$.

Figure~\ref{fig:sensitivity} summarizes
the average with standard error of the AUC as a function of the noise.
The plots show that when $\thetap=0.2$ and $0.3$,
the performance of the PNU-AUC optimization method is  
stable even when the estimated class-prior has some noise.
On the other hand, when $\thetap=0.1$, as the noise is close to $\rho=-0.09$,  
the performance largely decreases. 
Since the true class-prior is small,
it is sensitive to the negative bias. 
In particular, when $\rho=-0.09$, 
the gap between the estimated and true class-priors
is larger than other values.
For instance, when $\rho=-0.09$ and $\thetap=0.2$, 
$\thetap/\widehat{\theta}_\mathrm{P}\approx1.8$, 
but when $\rho=-0.09$ and $\thetap=0.1$, 
$\thetap/\widehat{\theta}_\mathrm{P}\approx10$. 
In contrast, the positive bias does not heavily affect 
the performance even when $\thetap=0.1$.

\begin{figure}[t]
	\centering
 	\subfigure[Banana ($d=2$)]{%
\ifmlj 	
 		\includegraphics[clip, width=.33\columnwidth]
 		{banana_nl50_nu1000_sensitivity.pdf}
\else
 		\includegraphics[clip, width=.33\columnwidth]
 		{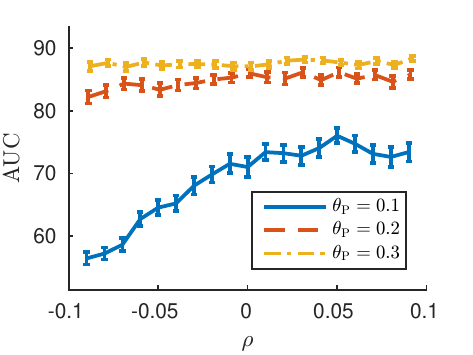}
\fi
		}%
 	\subfigure[cod-rna ($d=8$)]{%
\ifmlj
 		\includegraphics[clip, width=.33\columnwidth]
 		{cod-rna_nl50_nu1000_sensitivity.pdf}
\else
 		\includegraphics[clip, width=.33\columnwidth]
 		{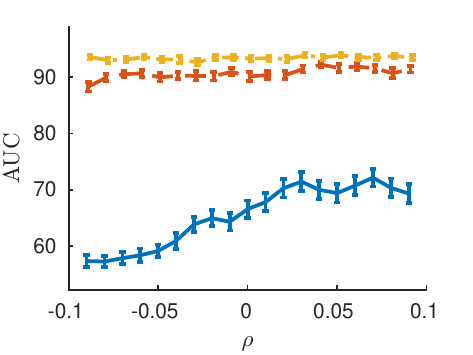}
\fi 		
		}%
 	\subfigure[w8a ($d=300$)]{%
\ifmlj
 		\includegraphics[clip, width=.33\columnwidth]
 		{w8a_nl50_nu1000_sensitivity.pdf}
\else
 		\includegraphics[clip, width=.33\columnwidth]
 		{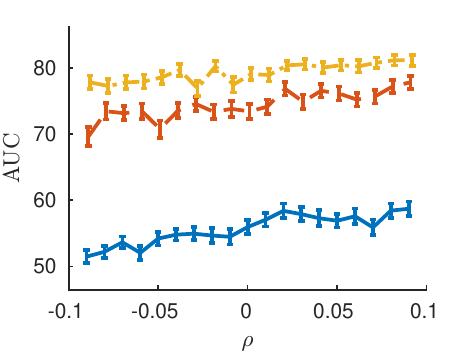}
\fi
		}
	\caption{
	Average with standard error of the AUC as a function of 
	the noise $\rho$ over $100$ trials.
	The PNU-AUC optimization method used the noisy class-prior 
	$\widehat{\theta}_\mathrm{P}=\thetap+\rho$
	in training.
	The plots show that 
	when $\thetap=0.2$ and $0.3$,
	the performance of the PNU-AUC optimization method 
	is stable even when the estimated class-prior has some noise.
	However, when $\thetap=0.1$, as the noise is close to $\rho=-0.09$, 
	the performance largely decreases.
	Since the true class-prior is small,
	it is sensitive to the negative bias. 
	}	
	\label{fig:sensitivity}
\end{figure}

\subsection{Scalability}
Finally, we report the scalability of our proposed PNU-AUC optimization method.
Specifically, we evaluated the AUC and computation time
while increasing the number of unlabeled samples.
We picked two large data sets: the SUSY and amazon$2$ data sets.
The number of positive and negative samples were $\np=40$ and $\nn=160$,
respectively.

Figure~\ref{fig:scalability} summarizes the average with standard error 
of the AUC and computation time as a function of the number of 
unlabeled samples.
The AUC on the SUSY data set slightly increased at $\nun=1,000,000$,
but the improvement on the amazon$2$ data set was not noticeable
or the performance decreased slightly.
In this experiment, the increase of the size of unlabeled data did not
significantly improve the performance of the classifier, 
but it did not affect adversely,
i.e., it did not cause significant performance degeneration.

The result of computation time  shows that 
the proposed PNU-AUC optimization method 
can handle approximately $1,000,000$ samples within reasonable 
computation time in this experiment.
The longer computation time on the SUSY data set before $\nun=10,000$ 
is because we need to choose one additional hyperparameter, 
i.e., the bandwidth of the Gaussian kernel basis function,
compared with the linear kernel basis function.
However, the effect gradually decreases;
after $\nun=10,000$, the matrix multiplication of the high dimensional matrix
on the amazon$2$ data set ($d=262,144$) requires more computation time 
than the SUSY data set ($d=18$).

\begin{figure}[t]
	\centering
 	\subfigure[AUC]{%
\ifmlj 	
 		\includegraphics[clip, width=.5\columnwidth]
 		{scalability_auc.pdf}
\else
 		\includegraphics[clip, width=.5\columnwidth]
 		{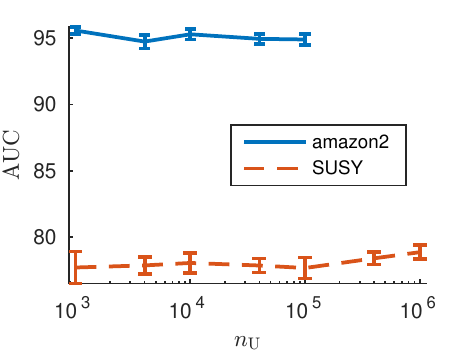}
\fi
		}%
 	\subfigure[Computation time]{%
\ifmlj
 		\includegraphics[clip, width=.5\columnwidth]
 		{scalability_time.pdf}
\else
 		\includegraphics[clip, width=.5\columnwidth]
 		{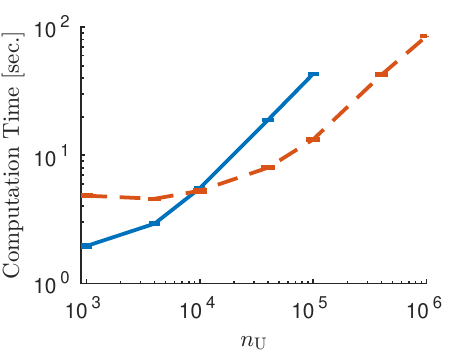}
\fi 		
		}%
	\caption{
	Average with standard error of the AUC as a function of 
	the number of unlabeled data $\nun$ over $20$ trials.
	}	
	\label{fig:scalability}
\end{figure}

\section{Conclusions}
In this paper, we proposed a novel AUC optimization 
method from positive and unlabeled data
and extend it to a novel semi-supervised AUC optimization method.
Unlike the existing approach,
our approach does not rely on strong distributional assumptions
on the data distributions such as the cluster and the entropy 
minimization principle.
Without the distributional assumptions,
we theoretically derived the generalization error bounds 
of our PU and semi-supervised AUC optimization methods.
Moreover, for our semi-supervised AUC optimization method, 
we showed that the variance of the empirical risk 
can be smaller than that of the supervised counterpart.
Through numerical experiments, 
we demonstrated the practical usefulness of the proposed 
PU and semi-supervised AUC optimization methods.

\subsection*{Acknowledgements}
TS was supported by KAKNEHI $15$J$09111$.
GN was supported by the JST CREST program and Microsoft Research Asia.
MS was supported by JST CREST JPMJCR$1403$.
We thank Han Bao for his comments.


\appendix

\section{PU-AUC Risk Estimator}
\label{app:emp_pu_risk}
In this section, we discuss the way of estimating the proposed PU-AUC risk.
Recall that the PU-AUC risk in Eq.~\eqref{eq:pu-risk} is defined as
\begin{align*}
R_\PU(f)=\frac{1}{\thetan}\Ep[\Eu[\ell(f(\bxp,\bxu))]]
	-\frac{\thetap}{\thetan}\Ep[\Epb[\ell(f(\bxp,\bbarxp))]] .
\end{align*}
If one additional set of positive samples $\{\bbarxp_i\}^\np_{i=1}$ 
is available, 
we obtain the unbiased PU-AUC risk estimator by 
\begin{align*}
\Rh_\PU(f)=\frac{1}{\thetan\np\nun}\sum^\np_{i=1}\sum^\nun_{k=1}
	\ell(f(\bxp_i,\bxu_k)) 
	- \frac{\thetap}{\thetan\np^2}\sum^\np_{i=1}\sum^\np_{i'=1}
	\ell(f(\bxp_i,\bbarxp_{i'})) .
\end{align*} 
We used this estimator in our theoretical analyses
because learning is not involved.
However, obtaining one additional set of samples is not always possible in practice.
Thus, instead of the above risk estimator, 
we use the following risk estimator in our implementation: 
\begin{align*}
\Rh_\PU(f)=\frac{1}{\thetan\np\nun}\sum^\np_{i=1}\sum^\nun_{k=1}
	\ell(f(\bxp_i,\bxu_k)) 
	- \frac{\thetap}{\thetan}
	\Bigg(\frac{1}{\np(\np-1)}\sum^\np_{i=1}\sum^\np_{i'=1}
	\ell(f(\bxp_i,\bxp_{i'})) - \frac{\ell(0)}{\np-1}\Bigg).
\end{align*}
This estimator is also unbiased.
To show unbiasedness of this estimator, 
let us rewrite the second term of the PU-AUC risk without coefficient 
in Eq.~\eqref{eq:pu-risk} as
\begin{align*}
\Ep[\Epb[\ell(f(\bxp,\bbarxp))]] = 
	\E\nolimits_{\bxp,\bbarxp}[\ell(f(\bxp,\bbarxp))] .
\end{align*}
The unbiased estimator can be expressed as
\begin{align*}
\frac{1}{\np(\np-1)}\sum^\np_{i=1}\sum^\np_{i'=1}
	\ell(f(\bxp_i,\bxp_{i'})) - \frac{\ell(0)}{\np-1} ,
\end{align*}
because the expectation of the above estimator
can be computed as follows: 
\begin{align*}
&\E\nolimits_{\bxp_1, \ldots, \bxp_\np}
	\Big[\frac{1}{\np(\np-1)}\sum^\np_{i=1}\sum^\np_{i'=1}
	\ell(f(\bxp_i,\bxp_{i'}))-\frac{\ell(0)}{\np-1}\Big] \\
&=\E\nolimits_{\bxp_1, \ldots, \bxp_\np}
	\Big[\frac{1}{\np(\np-1)}
	\Big(\sum^\np_{i=1}\ell(f(\bxp_i,\bxp_i))
	+\sum^\np_{i=1}\sum^\np_{i'\neq i}\ell(f(\bxp_i,\bxp_{i'}))\Big)
	-\frac{\ell(0)}{\np-1}\Big] \\
&=\E\nolimits_{\bxp_1, \ldots, \bxp_\np}
	\Big[\frac{1}{\np(\np-1)}\Big(\sum^\np_{i=1}\ell(0)
	+\sum^\np_{i=1}\sum^\np_{i'\neq i}\ell(f(\bxp_i,\bxp_{i'}))\Big)
	-\frac{\ell(0)}{\np-1}\Big] \\			
&=\frac{1}{\np(\np-1)}\sum^\np_{i=1}\sum^\np_{i'\neq i}
	\E\nolimits_{\bxp_i,\bxp_{i'}}	
	\Big[\ell(f(\bxp_i,\bxp_{i'}))\Big] \\
&=\frac{1}{\np(\np-1)}\sum^\np_{i=1}\sum^\np_{i'\neq i}
	\E\nolimits_{\bxp,\bbarxp}	
	\Big[\ell(f(\bxp,\bbarxp))\Big] \\	
&=\E\nolimits_{\bxp,\bbarxp}\Big[\ell(f(\bxp,\bbarxp))\Big] ,
\end{align*}
where we used $f(\bx,\bx)=\bw^\top(\bphi(\bx)-\bphi(\bx))=0$
from the second to third lines.
If the squared loss function $\ell(m)=(1-m)^2$ is used, $\ell(0)=1$
(cf. the implementation with the squared loss in Section~\ref{sec:impl}).
Therefore, the proposed PU-AUC risk estimator is unbiased.

\section{Proof of Generalization Error Bounds}
\label{proof:gen-err}
Here, we give the proofs of generalization error bounds 
in Section \ref{sec:theory-gen-err}.
The proofs are based on \citet{NIPS:Usunier+etal:2006}.

Let $\{\bx_i\}^m_{i=1}$ and $\{\bx'_j\}^n_{j=1}$
be two sets of samples drawn from the distribution
equipped with densities $q(\bx)$ and $q'(\bx)$,
respectively. 
Recall $\cF$ be a function class of bounded hyperplanes:
\begin{align*}
\cF:=\{ f(\bx,\bx')=\langle\bw, \bphi(\bx)-\bphi(\bx')\rangle
	\mid \|\bw\|\leq C_{\bw}; ~~ 
	\forall \bx\colon \|\bphi(\bx)\|\leq C_{\bphi} \},
\end{align*}
where $C_{\bw}>0$ and $C_{\bphi}>0$ are certain positive constants.
Then, the AUC risk over distributions $q$ and $q'$ 
and its empirical version can be expressed as
\begin{align*}
R(f)&:=\Eq[\Eqp[\ell(f(\bx,\bx'))]] , \\
\Rh(f)&:=\frac{1}{mn}\sum^m_{i=1}\sum^n_{j=1} 
	\ell(f(\bx_i,\bx'_j)) .
\end{align*}
For convenience, we define 
\begin{align*}
h(\delta):=2\sqrt{2}LC_\ell C_{\bw}C_{\bphi}
	+\frac{3}{2}\sqrt{2\log(2/\delta)} .
\end{align*} 

We first have the following theorem:
\begin{theorem}
\label{thm:mn-risk-bounds}
For any $\delta>0$, the following inequality holds
with probability at least $1-\delta$ for any $f\in\cF$:
\begin{align*}
R(f)-\Rh(f) &\leq 
	h(\delta)\frac{1}{\sqrt{\min(n,n')}} .
\end{align*}
\end{theorem}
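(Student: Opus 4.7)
The plan is a standard symmetrization-plus-McDiarmid argument, but with one crucial modification to handle the two-sample U-statistic structure of $\Rh$. Let $\Psi := \sup_{f\in\cF}(R(f)-\Rh(f))$. The three steps are: (1) concentrate $\Psi$ around $\E[\Psi]$ via McDiarmid; (2) decompose $\Rh$ as an average of i.i.d.\ empirical means to which standard symmetrization applies; (3) bound the resulting Rademacher complexity for the linear class $\cF$.

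For step (1), since $\ell$ is bounded by $C_\ell$, perturbing any $\bx_i$ changes $\Rh(f)$ by at most $C_\ell/m$ uniformly in $f\in\cF$, and perturbing any $\bx'_j$ by at most $C_\ell/n$. McDiarmid's bounded differences inequality applied to $\Psi$ (viewed as a function of the $m+n$ samples) therefore yields, with probability at least $1-\delta$,
$$\Psi \leq \E[\Psi] + C_\ell\sqrt{\tfrac{1}{2}(1/m+1/n)\log(1/\delta)} \leq \E[\Psi] + C_\ell\sqrt{\log(1/\delta)/\min(m,n)}.$$

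For step (2), the main difficulty is that the summands of $\Rh(f)=\frac{1}{mn}\sum_{i,j}\ell(f(\bx_i,\bx'_j))$ are not independent, since each $\bx_i$ appears in $n$ pairs. Following Usunier et al., I would assume without loss of generality that $m\leq n$ and take cyclic injections $\sigma_k(i):=((i+k-2)\bmod n)+1$ for $k=1,\ldots,n$; then $\bigcup_k\{(i,\sigma_k(i)):1\leq i\leq m\}=\{1,\ldots,m\}\times\{1,\ldots,n\}$, and within each group the $m$ pairs are independent. Writing $\Rh_k(f):=\frac{1}{m}\sum_{i=1}^m\ell(f(\bx_i,\bx'_{\sigma_k(i)}))$, we have $\Rh=\frac{1}{n}\sum_k\Rh_k$, hence by sup-of-average $\leq$ average-of-sup,
$$\E[\Psi] \leq \frac{1}{n}\sum_{k=1}^n \E\bigl[\sup_{f\in\cF}(R(f)-\Rh_k(f))\bigr] \leq 2C_\ell L\,\fR_m(\cF),$$
where the last inequality combines standard symmetrization on each $\Rh_k$ (which is now an average of $m$ independent terms) with Talagrand's contraction for the $L$-Lipschitz, $C_\ell$-bounded loss.

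For step (3), since $f(\bx,\bx')=\langle\bw,\bphi(\bx)-\bphi(\bx')\rangle$ with $\|\bw\|\leq C_{\bw}$ and $\|\bphi\|\leq C_{\bphi}$, the supremum over $\bw$ reduces to a norm, and a Jensen/independence calculation gives
$$\fR_m(\cF)=\frac{C_{\bw}}{m}\E\Bigl\|\sum_{i=1}^m\sigma_i\bigl(\bphi(\bx_i)-\bphi(\bx'_{\sigma_k(i)})\bigr)\Bigr\| \leq \frac{C_{\bw}}{m}\sqrt{4mC_{\bphi}^2} = \frac{2C_{\bw}C_{\bphi}}{\sqrt{\min(m,n)}}.$$
Combining the three steps and absorbing constants into $h(\delta)$ gives the claimed bound. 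The main obstacle is step (2): a naive symmetrization over all $mn$ doubly-indexed terms would suggest a rate of $1/\sqrt{mn}$, which is wrong because the effective sample size for a two-sample U-statistic is $\min(m,n)$; the Usunier-style partition into $n$ independent sub-samples of size $m$ is precisely what recovers the correct $1/\sqrt{\min(m,n)}$ rate and is the only nonstandard ingredient of the argument.
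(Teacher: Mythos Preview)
Your proposal is correct and follows essentially the same route as the paper, which simply invokes Theorem~7 of \citet{NIPS:Usunier+etal:2006} to obtain a data-dependent bound and then upper-bounds $\sum_{i,j}\|\bphi(\bx_i)-\bphi(\bx_j)\|^2\le 2nn'C_{\bphi}^2$; your three steps (McDiarmid, cyclic-partition symmetrization, linear-class Rademacher bound) are precisely the machinery behind that cited theorem, so you have unpacked what the paper treats as a black box. One caveat: your constants ($4LC_\ell C_{\bw}C_{\bphi}$ for the complexity term and $C_\ell\sqrt{\log(1/\delta)}$ for the concentration term) do not exactly reproduce the specific $h(\delta)=2\sqrt{2}LC_\ell C_{\bw}C_{\bphi}+\tfrac{3}{2}\sqrt{2\log(2/\delta)}$ defined in the paper, so ``absorbing constants into $h(\delta)$'' is slightly loose---to match the stated theorem verbatim you would need to track the constants as Usunier et al.\ do (their concentration step carries a factor $3$ and a $\log(2/\delta)$), or else quote their result directly as the paper does.
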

\begin{proof}
By slightly modifying Theorem $7$ in \citet{NIPS:Usunier+etal:2006}
to fit our setting, 
for any $\delta>0$, with probability at least $1-\delta$
for any $f\in\cF$, we have
\begin{align}
R(f)-\Rh(f) &\leq \frac{2LC_\ell C_{\bw}\sqrt{\max(n,n')}}{nn'}
	\sqrt{\sum^n_{i=1}\sum^{n'}_{j=1}
		\|\bphi(\bx_i)-\bphi(\bx_j)\|^2} \notag \\
	&\phantom{\leq}
		+ 3\sqrt{\frac{\log(2/\delta)}{2\min(n,n')}} .
	\label{eq:modified-thm7}
\end{align}
Applying the inequality
\begin{align*}
\sum^n_{i=1}\sum^{n'}_{j=1}
		\|\bphi(\bx_i)-\bphi(\bx_j)\|^2 
&\leq n'\sum^n_{i=1}\|\bphi(\bx_i)\|^2
	+n\sum^{n'}_{j=1}\|\bphi(\bx_j)\|^2 \\
&\leq 2nn' C_{\bphi}^2 , 	
\end{align*}
to the first term in Eq.~\eqref{eq:modified-thm7}, we obtain the theorem.
\end{proof}

By using Theorem~\ref{thm:mn-risk-bounds},
we prove the risk bounds of the PU-AUC and NU-AUC risks:
\begin{lemma}
For any $\delta>0$, the following inequalities hold
separately with probability at least $1-\delta$ for any $f\in\cF$:
\begin{align*}
R_\PU(f)-\Rh_\PU(f) &\leq
	h(\delta/2)  
	\Big(\frac{1}{\thetan\sqrt{\min(\np,\nun)}}
	+\frac{\thetap}{\thetan\sqrt{\np}}
	\Big) , \\
R_\NU(f)-\Rh_\NU(f) &\leq
	h(\delta/2)  
	\Big(\frac{1}{\thetap\sqrt{\min(\nn,\nun)}}
	+\frac{\thetan}{\thetap\sqrt{\nn}}
	\Big) .	
\end{align*}
\end{lemma}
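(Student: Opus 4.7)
The plan is to decompose the deviation $R_\PU(f) - \Rh_\PU(f)$ into two pieces that each match the pairwise empirical-process setup of Theorem~\ref{thm:mn-risk-bounds}, bound each piece separately by that theorem, and then combine via a union bound. The symmetric argument will give the NU-AUC bound by swapping the roles of P and N.

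Concretely, write
\begin{align*}
R_\PU(f) - \Rh_\PU(f)
&= \tfrac{1}{\thetan}\bigl(\Ep\Eu[\ell(f(\bxp,\bxu))] - \widehat{E}_{\rP\rU}[\ell(f(\bxp,\bxu))]\bigr) \\
&\phantom{=}{} - \tfrac{\thetap}{\thetan}\bigl(\Ep\Epb[\ell(f(\bxp,\bbarxp))] - \widehat{E}_{\rP\widebar{\rP}}[\ell(f(\bxp,\bbarxp))]\bigr),
\end{align*}
where the empirical expectations are exactly those used in $\Rh_\PU$ (following Appendix~\ref{app:emp_pu_risk}, the P$\widebar{\rm P}$ piece is an unbiased pairwise average over two independent positive samples). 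The first piece is a pairwise empirical process over $\np$ positive and $\nun$ unlabeled samples drawn i.i.d.\ from $\pp$ and $p$ respectively, so Theorem~\ref{thm:mn-risk-bounds} applies directly with $(m,n)=(\np,\nun)$ and yields a uniform bound of $h(\delta/2)/\sqrt{\min(\np,\nun)}$ with probability at least $1-\delta/2$. The second piece is a pairwise empirical process over two independent draws of size $\np$ from $\pp$, so Theorem~\ref{thm:mn-risk-bounds} applies with $(m,n)=(\np,\np)$ and yields $h(\delta/2)/\sqrt{\np}$ with probability at least $1-\delta/2$.

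A union bound then gives, with probability at least $1-\delta$ uniformly over $f\in\cF$,
\begin{align*}
R_\PU(f) - \Rh_\PU(f)
\le \tfrac{1}{\thetan}\cdot\tfrac{h(\delta/2)}{\sqrt{\min(\np,\nun)}}
 + \tfrac{\thetap}{\thetan}\cdot\tfrac{h(\delta/2)}{\sqrt{\np}},
\end{align*}
which is exactly the claimed PU bound. The NU case is obtained by the mirror decomposition into a pairwise NU-process (sizes $\nn,\nun$) and a pairwise N$\widebar{\rm N}$-process (size $\nn$), scaled by $1/\thetap$ and $\thetan/\thetap$ respectively, then another union bound at level $\delta/2$.

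The only nontrivial point is making the decomposition genuinely match the hypotheses of Theorem~\ref{thm:mn-risk-bounds}: the theorem is stated for a pairwise empirical average over two independent samples, and the PU-AUC estimator as implemented in practice reuses the positive sample with itself in the P$\widebar{\rm P}$ term. This is why the theoretical analysis uses the unbiased estimator of Appendix~\ref{app:emp_pu_risk} based on an additional independent positive set, so that the P$\widebar{\rm P}$ piece really is a two-sample pairwise mean and Theorem~\ref{thm:mn-risk-bounds} applies verbatim. Once that bookkeeping is done, the bound is immediate; there is no further concentration work to do, since all the Rademacher/McDiarmid ingredients were already absorbed into Theorem~\ref{thm:mn-risk-bounds}.
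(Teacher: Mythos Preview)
Your proposal is correct and follows essentially the same route as the paper: decompose $R_\PU-\Rh_\PU$ into its PU and P$\widebar{\mathrm P}$ components, apply Theorem~\ref{thm:mn-risk-bounds} to each at confidence level $\delta/2$, and combine by a union bound (the paper phrases the combination via $\sup(x+y)\le\sup x+\sup y$, but it is the same step). Your explicit remark that the P$\widebar{\mathrm P}$ term must use an independent positive sample so that Theorem~\ref{thm:mn-risk-bounds} applies verbatim is exactly the point the paper makes in Appendix~\ref{app:emp_pu_risk}, and is worth keeping.
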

\begin{proof}
Recall that the PU-AUC and NU-AUC risks are expressed as
\begin{align*}
R_\PU(f)&=\frac{1}{\thetan}\Ep[\Eu[\ell(f(\bxp,\bxu))]]
	-\frac{\thetap}{\thetan}\Ep[\Epb[\ell(f(\bxp,\bbarxp))]] , \\
R_\NU(f)&=\frac{1}{\thetap}\Eu[\En[\ell(f(\bxu,\bxn))]]
	-\frac{\thetan}{\thetap}\En[\Enb[\ell(f(\bxn,\bbarxn))]] .	
\end{align*}

Based on Theorem~\ref{thm:mn-risk-bounds},
for any $\delta>0$, we have these uniform deviation bounds 
with probability at least $1-\delta/2$:
\begin{align*}
\sup_{f\in\cF}\Big(\Ep[\Eu[\ell(f(\bxp,\bxu))]]
	-\frac{1}{\np\nun}
	\sum^\np_{i=1}\sum^\nun_{k=1}\ell(f(\bxp_i,\bxu_k))\Big)
	&\leq h(\delta/2)\frac{1}{\sqrt{\min(\np,\nun)}} , \\
\sup_{f\in\cF}\Big(\Eu[\En[\ell(f(\bxu,\bxn))]]
	-\frac{1}{\nn\nun}
	\sum^\nun_{k=1}\sum^\nn_{j=1}\ell(f(\bxu_k,\bxn_j))\Big)
	&\leq h(\delta/2)\frac{1}{\sqrt{\min(\nn,\nun)}} , \\
\sup_{f\in\cF}\Big(\Ep[\Epb[\ell(f(\bxp,\bbarxp))]]
	-\frac{1}{\np^2}\sum^\np_{i=1}\sum^\np_{i'=1}
	\ell(f(\bxp_i,\bbarxp_{i'}))\Big) 
	&\leq h(\delta/2)\frac{1}{\sqrt{\np}} ,	 \\
\sup_{f\in\cF}\Big(\En[\Enb[\ell(f(\bxn,\bbarxn))]]
	-\frac{1}{\nn^2}\sum^\nn_{j=1}\sum^\nn_{j'=1}
	\ell(f(\bxn_j,\bbarxn_{j'}))\Big) 	
	&\leq h(\delta/2)\frac{1}{\sqrt{\nn}} .			
\end{align*}

Simple calculation showed that for any $\delta>0$, 
with probability $1-\delta$, we have
\begin{align}
\sup_{f\in\cF}\Big(
	R_\PU(f) - \Rh_\PU(f)\Big) 
	&\leq
	\frac{1}{\thetan}
	\sup_{f\in\cF}\Big(\Ep[\Eu[\ell(f(\bxp,\bxu))]]
	-\frac{1}{\np\nun}
	\sum^\np_{i=1}\sum^\nun_{k=1}\ell(f(\bxp_i,\bxu_k))\Big) \notag \\
	&\phantom{=}+\frac{\thetap}{\thetan}\sup_{f\in\cF}\Big(
	\Ep[\Epb[\ell(f(\bxp,\bbarxp))]]
	-\frac{1}{\np^2}\sum^\np_{i=1}\sum^\np_{i'=1}
	\ell(f(\bxp_i,\bbarxp_{i'}))\Big) \notag \\		
	&\leq h(\delta/2)
	\Big(\frac{1}{\thetan\sqrt{\min(\np,\nun)}}
	+\frac{\thetap}{\thetan\sqrt{\np}}
	\Big) ,	
	\label{eq:unif-pu-risk}
\end{align}
where we used
\begin{align*}
\sup(x+y)&\leq \sup(x)+\sup(y) , \\
R_\PU(f)&\leq
	\frac{1}{\thetan}\Ep[\Eu[\ell(f(\bxp,\bxu))]]
	+\frac{\thetap}{\thetan}\Ep[\Epb[\ell(f(\bxp,\bbarxp))]] .  
\end{align*}
Similarly, for the NU-AUC risk, we have
\begin{align}
\sup_{f\in\cF}\Big(
	R_\NU(f) - \Rh_\NU(f)\Big) 
	&\leq h(\delta/2)
	\Big(\frac{1}{\thetap\sqrt{\min(\nn,\nun)}}
	+\frac{\thetan}{\thetap\sqrt{\nn}}
	\Big) .	
	\label{eq:unif-nu-risk}
\end{align}
Eqs.~\eqref{eq:unif-pu-risk} and \eqref{eq:unif-nu-risk}
conclude the lemma.
\end{proof}
Finally, we give the proof of Theorem~\ref{thm:pu-nu-gen-err}.
\begin{proof}
Assume the loss satisfying $\ellzo(m)\leq M\ell(m)$.
We have $I(f)\leq M R(f)$.
When $M=1$ such as $\ells(m)$ and $\elle(m)$, 
$I(f)\leq R(f)$ holds.
This observation yields Theorem~\ref{thm:pu-nu-gen-err}.
\end{proof}

Next, we prove the generalization error bounds of 
the PNPU-AUC and PNNU-AUC risks in Theorem~\ref{thm:punu-pnu-gen-err}.
We first prove the following risk bounds:
\begin{lemma}
For any $\delta>0$, the following inequalities hold
separately with probability at least $1-\delta$ for all $f\in\cF$:
\begin{align*}
R_\PNPU^\gamma(f)-\Rh_\PNPU^\gamma(f) 
	&\leq h(\delta/3)
	\Big(	
	\frac{1-\gamma}{\sqrt{\min(\np,\nn)}} 	
	+\frac{\gamma }{\thetan\sqrt{\min(\np,\nun)}}
	+\frac{\thetap\gamma}{\thetan\sqrt{\np}}	
	\Big) , \\
R_\PNNU^\gamma(f)-\Rh_\PNNU^\gamma(f) 
	&\leq h(\delta/3)
	\Big(	
	\frac{1-\gamma}{\sqrt{\min(\np,\nn)}} 	
	+\frac{\gamma }{\thetap\sqrt{\min(\nn,\nun)}}
	+\frac{\thetan\gamma}{\thetap\sqrt{\nn}}	
	\Big) .	
\end{align*}
\end{lemma}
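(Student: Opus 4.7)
The plan is to decompose each combined risk into its three constituent expectations and apply Theorem~\ref{thm:mn-risk-bounds} separately to each component, then combine the resulting uniform deviation bounds via a union bound.

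Concretely, for the PNPU case I would rewrite
\begin{align*}
R_\PNPU^\gamma(f) = (1-\gamma)\Ep[\En[\ell(f(\bxp,\bxn))]]
	+ \frac{\gamma}{\thetan}\Ep[\Eu[\ell(f(\bxp,\bxu))]]
	- \frac{\gamma\thetap}{\thetan}\Ep[\Epb[\ell(f(\bxp,\bbarxp))]] ,
\end{align*}
and likewise $\Rh_\PNPU^\gamma(f)$ as the sum of the three corresponding sample averages (with the $(\bxp_i,\bbarxp_{i'})$ term estimated by an independent positive sample, as in the PU case of the preceding lemma). Now Theorem~\ref{thm:mn-risk-bounds} applies to each piece since each is a paired expectation over two independent distributions $(q,q')$. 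Applying it with confidence $\delta/3$ to each of the three pairs $(\rP,\rN)$, $(\rP,\rU)$, and $(\rP,\widebar{\rP})$ gives uniform deviation bounds of order $1/\sqrt{\min(\np,\nn)}$, $1/\sqrt{\min(\np,\nun)}$, and $1/\sqrt{\np}$ respectively.

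Next, I would combine these via the subadditivity of suprema, $\sup(x+y+z)\leq \sup(x)+\sup(y)+\sup(z)$, together with a union bound. This immediately yields the claimed bound with the coefficients $(1-\gamma)$, $\gamma/\thetan$, and $\gamma\thetap/\thetan$, multiplied by the respective confidence terms $h(\delta/3)/\sqrt{\min(\np,\nn)}$, $h(\delta/3)/\sqrt{\min(\np,\nun)}$, and $h(\delta/3)/\sqrt{\np}$. The PNNU case is analogous by symmetry, using the three pairs $(\rP,\rN)$, $(\rU,\rN)$, and $(\rN,\widebar{\rN})$ with coefficients $(1-\gamma)$, $\gamma/\thetap$, and $\gamma\thetan/\thetap$.

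No real obstacle is expected: the ingredients (bounded hyperplane class, Lipschitz and bounded surrogate, and Theorem~\ref{thm:mn-risk-bounds}) are already in place. The only mildly delicate point is to keep the sign conventions straight when bounding $R_\PNPU^\gamma(f)-\Rh_\PNPU^\gamma(f)$, since the third term enters with a negative coefficient; handling this by absorbing the sign into the deviation (using $\sup(-x)$ and the two-sided form of Theorem~\ref{thm:mn-risk-bounds}, which holds by applying it to $-\ell$ or equivalently swapping roles) gives the stated one-sided inequality with the correct positive constants. The generalization bounds in Theorem~\ref{thm:punu-pnu-gen-err} then follow by combining these risk bounds with $\ellzo(m)\leq \ell(m)$ exactly as in the proof of Theorem~\ref{thm:pu-nu-gen-err}.
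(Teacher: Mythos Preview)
Your proposal is correct and follows essentially the same approach as the paper: decompose $R_\PNPU^\gamma - \Rh_\PNPU^\gamma$ into its three constituent deviations, apply Theorem~\ref{thm:mn-risk-bounds} with confidence $\delta/3$ to each pair, combine via subadditivity of the supremum and a union bound, and handle the negative third term by flipping its sign inside the supremum (the paper does this implicitly, just as in the PU case). The PNNU bound is indeed obtained by the symmetric argument you describe.
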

\begin{proof}
Recall the PNPU-AUC and PNNU-AUC risks:
\begin{align*}
R_\PNPU^\gamma(f)
	&:=(1-\gamma)R_\PN(f)+\gamma R_\PU(f), \\
R_\PNNU^\gamma(f)
	&:=(1-\gamma)R_\PN(f)+\gamma R_\NU(f) .
\end{align*}

Based on Theorem~\ref{thm:mn-risk-bounds},
for any $\delta>0$, we have these uniform deviation bounds 
with probability at least $1-\delta/3$:
\begin{align*}
\sup_{f\in\cF}\Big(\Ep[\En[\ell(f(\bxp,\bxn))]]
	-\frac{1}{\np\nn}
	\sum^\np_{i=1}\sum^\nn_{j=1}\ell(f(\bxp_i,\bxn_j))\Big)
	&\leq h(\delta/3)\frac{1}{\sqrt{\min(\np,\nn)}} , \\
\sup_{f\in\cF}\Big(\Ep[\Eu[\ell(f(\bxp,\bxu))]]
	-\frac{1}{\np\nun}
	\sum^\np_{i=1}\sum^\nun_{k=1}\ell(f(\bxp_i,\bxu_k))\Big)
	&\leq h(\delta/3)\frac{1}{\sqrt{\min(\np,\nun)}} , \\
\sup_{f\in\cF}\Big(\Eu[\En[\ell(f(\bxu,\bxn))]]
	-\frac{1}{\nn\nun}
	\sum^\nun_{k=1}\sum^\nn_{j=1}\ell(f(\bxu_k,\bxn_j))\Big)
	&\leq h(\delta/3)\frac{1}{\sqrt{\min(\nn,\nun)}} , \\
\sup_{f\in\cF}\Big(\Ep[\Epb[\ell(f(\bxp,\bbarxp))]]
	-\frac{1}{\np^2}\sum^\np_{i=1}\sum^\np_{i'=1}
	\ell(f(\bxp_i,\bbarxp_{i'}))\Big) 
	&\leq h(\delta/3)\frac{1}{\sqrt{\np}} ,	 \\
\sup_{f\in\cF}\Big(\En[\Enb[\ell(f(\bxn,\bbarxn))]]
	-\frac{1}{\nn^2}
	\sum^\nn_{j=1}\sum^\nn_{j'=1}\ell(f(\bxn_j,\bbarxn_{j'}))\Big) 			
	&\leq h(\delta/3)\frac{1}{\sqrt{\nn}} .			
\end{align*}

Combining three bounds from the above, for any $\delta>0$, 
with probability $1-\delta$, we have
\begin{align*}
&\sup_{f\in\cF}\Big(R_\PNPU^\gamma(f) - \Rh_\PNPU^\gamma(f)\Big) \\
	&\leq (1-\gamma) 
	\sup_{f\in\cF}\Big(\Ep[\En[\ell(f(\bxp,\bxn))]]
	-\frac{1}{\np\nn}
	\sum^\np_{i=1}\sum^\nn_{j=1}\ell(f(\bxp_i,\bxn_j))\Big) \\
	&\phantom{\leq}+\frac{\gamma}{\thetan}
	\sup_{f\in\cF}\Big(\Ep[\Eu[\ell(f(\bxp,\bxu))]]-\frac{1}{\np\nun}
	\sum^\np_{i=1}\sum^\nun_{k=1}\ell(f(\bxp_i,\bxu_k))\Big) \\
	&\phantom{\leq}+\frac{\gamma\thetap}{\thetan}
	\sup_{f\in\cF}\Big(\Ep[\Epb[\ell(f(\bxp,\bbarxp))]]
	-\frac{1}{\np^2}\sum^\np_{i=1}\sum^\np_{i'=1}
	\ell(f(\bxp_i,\bbarxp_{i'}))	\Big) \\
	&\leq h(\delta/3) 	
	\Big(	
	\frac{1-\gamma}{\sqrt{\min(\np,\nn)}} 	
	+\frac{\gamma }{\thetan\sqrt{\min(\np,\nun)}}
	+\frac{\gamma\thetap}{\thetan\sqrt{\np}}	
	\Big) .	
\end{align*}
This concludes the risk bounds of the PNPU-AUC risk.

Similarly, we prove the risk bounds of the PNNU-AUC risk.
\end{proof}
Again, $I(f)\leq R(f)$ holds in our setting.
This leads to Theorem~\ref{thm:punu-pnu-gen-err}.

\section{Proof of Variance Reduction}
\label{proof:var}
Here, we give the proof of Theorem~\ref{thm:var-red-pnpu-pnnu}.
\begin{proof}
The empirical PNPU-AUC risk can be expressed as
\begin{align*}
\Rh_\PNPU^\gamma(f)&=(1-\gamma)\Rh_\PN(f)+\gamma \Rh_\PU(f) \\
&=\frac{1-\gamma}{\np\nn}\sum^\np_{i=1}\sum^\nn_{j=1}
	\ell(f(\bxp_i,\bxn_j)) 
	+\frac{\gamma}{\thetan\np\nun}
	\sum^\np_{i=1}\sum^\nun_{k=1}\ell(f(\bxp_i,\bxu_k))]] \\
	&\phantom{=}-\frac{\gamma\thetap}{\thetan\np^2}
	\sum^\np_{i=1}\sum^\np_{i'=1}\ell(f(\bxp_i,\bbarxp_{i'}))]] .
\end{align*} 
Assume $\nun\to\infty$, we obtain  
\begin{align*}
\Var[\Rh_\PNPU^\gamma(f)]&=
	\frac{(1-\gamma)^2}{\np\nn}\sigmapn^2(f)
	+\frac{\gamma^2\thetap^2}{\np^2\thetan^2}\sigmapp^2(f)
	+\frac{(1-\gamma)\gamma}{\thetan\np}\taupnpu(f) \\
	&\phantom{=}-
	\frac{\gamma^2\thetap}{\thetan^2\np}\taupupp(f)
	-\frac{(1-\gamma)\gamma\thetap}{\thetan\np}
	\taupnpp(f) \\ 
&=(1-\gamma)^2 \psipn + \gamma^2\psipu + (1-\gamma)\gamma\psipp ,
\end{align*}
where the terms divided by $\nun$ are disappeared.
Setting the derivative with respect to $\gamma$ at zero,
we obtain the minimizer in Eq.~\eqref{eq:gam-pnpu}.

For the empirical PNNU-AUC risk, when $\nun\to\infty$, 
we obtain
\begin{align*}
\Var[\Rh_\PNNU(g)]&=
	\frac{(1-\gamma)^2}{\np\nn}\sigmapn^2(g)
	+\frac{\gamma^2\thetan^2}{\nn^2\thetap^2}\sigmann^2(g)
	+\frac{(1-\gamma)\gamma}{\thetap\nn}\taupnnu(g) \\
	&\phantom{=}-
	\frac{\gamma^2\thetan}{\thetap^2\nn}\taununn(g) 
	-\frac{(1-\gamma)\gamma\thetan}{\thetap\nn}
	\taupnnn(g) \\
&=(1-\gamma)^2 \psipn + \gamma^2 \psinu + (1-\gamma)\gamma \psinn .
\end{align*}
Setting the derivative with respect to $\gamma$ at zero,
we obtain the minimizer in Eq.~\eqref{eq:gam-pnnu}.
\end{proof}

\section{Statistics of Data Sets}
\label{app:stats_data}
Table~\ref{tab:spec_data} summarizes the statistics of the data sets 
used in our experiments.
The class balance is the number of positive samples
divided by that of total samples.
The sources of data sets are as follows:    
the IDA Benchmark Repository (IDA) \citep{IDA:MLJ:Ratsch+etal:2001},
the UCI Machine Learning Repository (UCI) \citep{UCI:Lichman:2013},
the LIBSVM data sets (LIBSVM) \citep{LibSVM:Chang+etal:2011},
the Semi-Supervised Learning Book (SSL) \citep{book:Chapelle+etal:2006},
and the Amazon Review (Amazon$7$) \citep{MLJ:Blondel+etal:2013}.

\begin{table}[!ht]
	\centering
	\caption{The statistics of the data sets.
	The source of data sets is as follows:    
	the IDA Benchmark Repository (IDA) \citep{IDA:MLJ:Ratsch+etal:2001},
	the UCI Machine Learning Repository (UCI) \citep{UCI:Lichman:2013},
	the LIBSVM data sets (LIBSVM) \citep{LibSVM:Chang+etal:2011},
	the Semi-Supervised Learning Book (SSL) \citep{book:Chapelle+etal:2006},
	and the Amazon Review (Amazon$7$) \citep{MLJ:Blondel+etal:2013}.}
	\label{tab:spec_data}
	\begin{tabular}{lrrrl}
	\toprule
	Data set 
	& \multicolumn{1}{l}{Dimension} 
	& \multicolumn{1}{l}{\#samples} 
	& \multicolumn{1}{l}{Class balance} 
	& Source 
	\\
	\midrule
	Banana & $2$ & $5,300$ & $0.45$ & IDA \\ 
	skin\_nonskin & $3$ & $245,057$ & $0.21$ & LIBSVM \\ 
	cod-rna & $8$ & $331152$ & $0.67$ & LIBSVM \\ 
	Magic & $10$ & $19020$ & $0.35$ & UCI \\ 
	Image & $18$ & $2,310$ & $0.39$ & IDA \\ 
	SUSY & $18$ & $5,000,000$ & $0.46$ & LIBSVM \\ 
	Ringnorm & $20$ & $7,400$ & $0.50$ & IDA \\ 
	Twonorm 	& $20$ & $7,400$ & $0.50$ & IDA \\ 
	Waveform & $21$ & $5,000$ & $0.33$ & IDA \\ 
	covtype & $54$ & $581,012$ & $0.51$ & LIBSVM \\ 
	phishing & $68$ & $11,055$ & $0.44$ & LIBSVM \\ 
	a$9$a & $83$ & $48,842$ & $0.24$ & LIBSVM \\ 
	mushrooms & $112$ & $8,124$ & $0.48$ & LIBSVM \\ 
	USPS & $241$ & $1,500$ & $0.20$ & SSL \\ 
	w$8$a & $300$ & $64,700$ & $0.03$ & LIBSVM \\ 
	rcv$1$ & $47,236$ & $697,641$ & $0.53$ & LIBSVM \\ 
	amazon$2$ & $262,144$ & $1,149,374$ & $0.18$ & Amazon$7$ \\ 
	news$20$ & $1,355,191$ & $19,996$ & $0.50$ & LIBSVM \\ 
	\bottomrule 
	\end{tabular}
\end{table}

\newpage
\bibliographystyle{plainnat-reversed}
\bibliography{pnu_auc}

\end{document}